\documentclass[10pt,journal,compsoc]{IEEEtran}
\usepackage{array}
\usepackage{textcomp}
\usepackage{subfigure}
\usepackage{stmaryrd}

\usepackage{stfloats}
\usepackage{url}
\usepackage{verbatim}
\usepackage{graphicx}
\hyphenation{op-tical net-works semi-conduc-tor IEEE-Xplore}
\def\BibTeX{{\rm B\kern-.05em{\sc i\kern-.025em b}\kern-.08em
    T\kern-.1667em\lower.7ex\hbox{E}\kern-.125emX}}
\usepackage{balance}

\usepackage{amsmath,amssymb,amsfonts}
\usepackage{hyperref}
\hypersetup{colorlinks=true}
\def\equationautorefname~#1\null{%
  Equation~(#1)\null
}
\usepackage{algorithm}
\usepackage[noend]{algpseudocode}
\usepackage{colortbl}
\usepackage{xcolor}
\usepackage{colortbl}%

\usepackage{capt-of,lipsum}
\usepackage{amsthm}
\definecolor{mycolor}{rgb}{0.95, 0.985, 0.93}
\doublerulesepcolor{mycolor}
\usepackage{xspace}
\usepackage{color}
\usepackage{tcolorbox}
\definecolor{mGray1}{rgb}{0.9,0.9,0.9}
\definecolor{mGray}{rgb}{0.5,0.5,0.5}
\definecolor{commentcolor}{rgb}{0.6,0.6,0.6}
\usepackage{enumitem}
\newtheorem{definition}{Definition}
\usepackage{pifont}
\usepackage{xcolor}
\usepackage{listings}
\newcommand{\m}{\mathit} 
\newcommand{\mtl}{\phi} 
\newcommand{\interval}{I} 
\newcommand{\relation}{R} 
\newcommand{\Obj}{o} 
\newcommand{\Subj}{s} 
\newcommand{\OBJ}{O} 
\newcommand{\SUBJ}{S} 
\newcommand{\Prolog}{\mathcal{P}}
\newcommand{\drule}{Q}

\newcommand{\hornarrow}{\,\text{:--}\,}
\newcommand{\curly}{\,\mathrel{\leadsto}\,}
\newcommand{\encoding}[3]{#1{\curly}(#2, #3)}
\newcommand{\deriveRules}[3]{#1\,{\hookrightarrow}\,(#2, #3)}
\newcommand{\nm}{\m{nm}}
\newcommand{\entity}{\m{entity}}
\newcommand{\groundTruthTriples}{\widetilde{\relation}_{\m{ground}}}
\newcommand{\derivedFacts}{\widetilde{\relation}_{\m{derived}}}

\newcommand{\llmResponse}{\m{Resp}}

\newcommand{\semantic}{\widetilde{G}}
\newcommand{\similarity}{S}

\usepackage{makecell} 
\usepackage[bottom]{footmisc}
\newcommand{\RNeg}{[\relation\text{-}\m{Neg}]}
\newcommand{\RSym}{[\relation\text{-}\m{Sym}]}
\newcommand{\RInv}{[\relation\text{-}\m{Inverse}]}
\newcommand{\RTrans}{[\relation\text{-}\m{Trans}]}

\newcommand{\commentstyle}[1]{\textcolor{mGray}{\footnotesize{#1}}}

\newcommand{\entityCat}{$\m{EC}$}
\newcommand{\relationCat}{$\m{RC}$}

\algrenewcommand\algorithmicindent{1.3em}%

\usepackage[normalem]{ulem}
\newcommand{\shortNeg}{!} 
\newcommand{\syh}[1]{{\small \ttfamily \color{purple}{{{YH:#1}}}}}
\newcommand\figref[1]{Fig.\,{\ref{#1}}}
\newcommand\tabref[1]{Table \textcolor{blue}{\ref{#1}}}
\newcommand\secref[1]{\S \textcolor{blue}{\ref{#1}}}
\usepackage{adjustbox}
\usepackage{wrapfig}
\newcommand\theoref[1]{Theorem~\textcolor{blue}{\ref{#1}}}

\newcommand\defref[1]{Definition~\textcolor{blue}{\ref{#1}}}
\newcommand\algoref[1]{Algorithm~\textcolor{blue}{\ref{#1}}}
\newcommand{\trans}{\m{R}}
\newcommand{\Istart}{n_{\m{start}}}
\newcommand{\Iend}{n_{\m{end}}}
\newcommand{\history}{\mathcal{H}}
\newcommand{\timepoint}{t}
\newcommand{\distance}{d}
\newcommand\nmNEW{\nm_{\m{new}}}
\usepackage{thmtools} 
\usepackage{thm-restate}

\newcommand{\plus}{\texttt{+}}

\usepackage{subcaption}
\usepackage{booktabs}

\definecolor{keywordcolor}{rgb}{0.13,0.29,0.53}
\definecolor{stringcolor}{rgb}{0.31,0.60,0.02}
\definecolor{commentcolor}{rgb}{0.56,0.35,0.01}
\definecolor{backcolour}{rgb}{0.95,0.95,0.92}


\lstset{
 language=C,
 escapeinside={(*@}{@*)},
 basicstyle=\small\ttfamily,
 columns=[c]fixed,
 numbers=left,   
 xleftmargin=2em, 
 numberstyle=\tiny\color{mGray},
 commentstyle=\color{commentcolor}\ttfamily,
 keywordstyle=\color{airforceblue}\bfseries,
 upquote=true,
 breaklines=true,
 showstringspaces=false,
 stringstyle=\color{black},
 keywordstyle=[2]\color{purple}\ttfamily, %
 morekeywords=[2]{int, if , return},
 literate={'"'}{\textquotesingle "\textquotesingle}3
}

\newcommand{\head}[1]{{\noindent\textbf{#1}}}

\newcommand{\tool}{\textsc{Drowzee}\xspace}
\newcommand{\mtltoNL}{\textsc{mtl2NL}\xspace}
\newcommand{\iyear}{\m{t}}

\newcommand{\instruction}{\textsc{Instruction}\xspace}
\newcommand{\query}{\textsc{Query}\xspace}
\newcommand{\hallucinationAnswer}{\textsc{Hallucinations Answer}\xspace}

\begin{document}


\title{Detecting LLM Fact-conflicting Hallucinations Enhanced by Temporal-logic-based Reasoning}

\author{
Ningke Li$^{\star}$,
Yahui Song$^{\star}$,
Kailong Wang$^{\dagger}$,
Yuekang Li,
Ling Shi,
Yi Liu,
Haoyu Wang
    \thanks{N. Li, K. Wang, and H. Wang are with Huazhong University of Science and Technology, China. 
    E-mail: \{lnk\_01, wangkl, haoyuwang\}@hust.edu.cn}
    \thanks{Y. Song is with the National University of Singapore, Singapore.\protect\\
    E-mail: yahui\_s@nus.edu.sg}
    \thanks{Y. Li is with the University of New South Wales, Australia.\protect\\
    E-mail: yuekang.li@unsw.edu.au}
    \thanks{L. Shi and Y. Liu are with Nanyang Technological University, Singapore.\protect\\
    E-mail: ling.shi@ntu.edu.sg, yi009@e.ntu.edu.sg}
\thanks{${\star}$ Ningke Li and Yahui Song contribute equally to this work.}
\thanks{${\dagger}$ Kailong Wang is the corresponding author.}
}


\IEEEtitleabstractindextext{%
\begin{abstract}
Large language models (LLMs) face the challenge of hallucinations -- outputs that seem coherent but are actually incorrect. A particularly damaging type is fact-conflicting hallucination (FCH), where generated content contradicts established facts. Addressing FCH presents three main challenges: \textbf{1)} Automatically constructing and maintaining large-scale benchmark datasets is difficult and resource-intensive; \textbf{2)} Generating complex and efficient test cases that the LLM has not been trained on -- especially those involving intricate temporal features -- is challenging, yet crucial for eliciting hallucinations; and \textbf{3)} Validating the reasoning behind LLM outputs is inherently difficult, particularly with complex logical relationships, as it requires transparency in the model's decision-making process. 

This paper presents \tool{}, an innovative end-to-end metamorphic testing framework that utilizes temporal logic to identify fact-conflicting hallucinations (FCH) in large language models (LLMs). \tool{} builds a comprehensive factual knowledge base by crawling sources like Wikipedia and uses automated temporal-logic reasoning to convert this knowledge into a large, extensible set of test cases with ground truth answers. LLMs are tested using these cases through template-based prompts, which require them to generate both answers and reasoning steps. To validate the reasoning, we propose two semantic-aware oracles that compare the semantic structure of LLM outputs to the ground truths. 
Across nine LLMs in nine different knowledge domains, experimental results show that \tool{} effectively identifies rates of non-temporal-related hallucinations ranging from 24.7\% to 59.8\%, and rates of temporal-related hallucinations ranging from 16.7\% to 39.2\%.
Key insights reveal that LLMs struggle with out-of-distribution knowledge and logical reasoning. These findings highlight the importance of continued efforts to detect and mitigate hallucinations in LLMs.


\end{abstract}

\begin{IEEEkeywords}
Large Language Model, Hallucination, Temporal Logic, Metamorphic Testing
\end{IEEEkeywords}
}

\maketitle


\IEEEpeerreviewmaketitle
\section{Introduction}
Large Language Models (LLMs) have revolutionized language processing, demonstrating impressive text generation and comprehension capabilities with diverse applications. However, despite their growing use, LLMs face significant security and privacy challenges~\cite{siddiq2023generate, hou2023large, kaddour2023challenges, li2024model, 10.1145/3691620.3695510}, which affect their overall effectiveness and reliability. A critical issue is the phenomenon of \emph{hallucination}, where LLMs generate outputs that are coherent but factually incorrect or irrelevant. This tendency to produce misleading information compromises the safety and usability of LLM-based systems. This paper focuses on \emph{fact-conflicting hallucina}tion (FCH), the most prominent form of hallucination in LLMs. FCH occurs when LLMs generate content that directly contradicts established facts. For instance, as illustrated in \figref{fig:example1}, an LLM incorrectly asserts that ``\emph{Haruki Murakami won the Nobel Prize in Literature in 2016}'', whereas the fact is that ``\emph{Haruki Murakami has not won the Nobel Prize, though he has received numerous other literary awards}''. 
Such inaccuracies can significantly lead to user confusion and undermine the trust and reliability that are crucial for LLM applications.

\begin{figure}[t]
\centering
\vspace{3mm}
\hspace{-3mm}
\includegraphics[width=\linewidth]{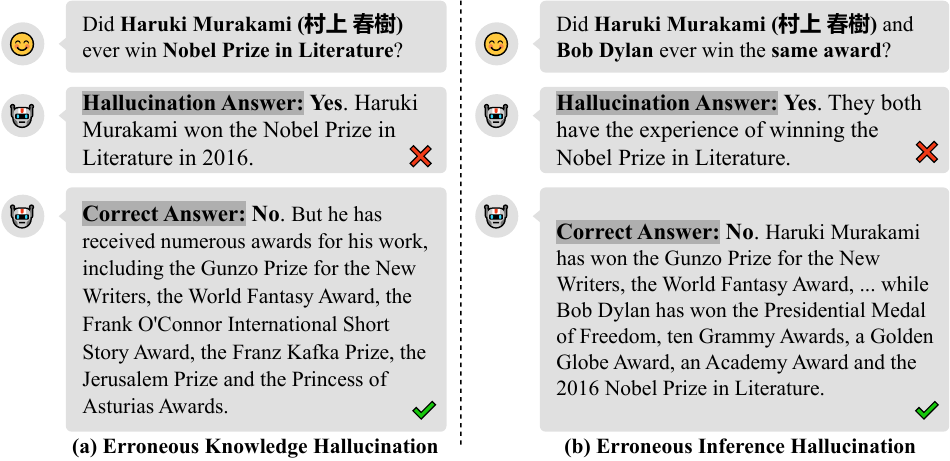}
\\[0.5em]
\caption{A Hallucination Output Example}
\label{fig:example1}
\vspace{-4mm}
\end{figure}

Recent studies have introduced various methods to detect LLM hallucinations. A common approach involves developing specialized benchmarks, such as TruthfulQA~\cite{lin-etal-2022-truthfulqa}, HaluEval~\cite{HaluEval}, and KoLA~\cite{yu2023kola}, to assess hallucinations in tasks like question-answering, summarization, and knowledge graphs. 
While manually labeled datasets provide valuable insights, current methods often rely on simplistic or semi-automated techniques such as string matching, manual validation, or verification through another language model. These approaches reveal significant gaps in automatically and effectively detecting fact-conflicting hallucinations (FCH). 
The primary challenges in FCH detection arise from the lack of dedicated ground truth datasets, the absence of comprehensive test cases designed to trigger FCH, and the lack of a robust testing framework.  
Unlike other types of hallucinations, such as input-conflicting or context-conflicting hallucinations~\cite{ji-etal-2023-rho, shi2023large}, which can often be identified through semantic consistency checks, detecting FCH requires the verification of factual accuracy against external knowledge sources/databases. This process is particularly challenging and resource-intensive, especially for tasks that involve complex logical relationships~\cite{zhang2024fusion}. We identify three primary challenges in addressing this research gap:

\begin{enumerate}[itemsep=1mm, wide,  labelindent=9pt]
\item {\textbf{Automatically constructing and updating benchmark datasets.}} Existing methodologies mainly rely on manually curated benchmarks for detecting specific hallucinations, which fail to encompass the broad and dynamic spectrum of fact-conflicting scenarios in LLMs. 
Meanwhile, due to the ever-evolving nature of knowledge, the need for frequent updates to benchmark data imposes a substantial and continuous maintenance effort.
The reliance on benchmark datasets thus restricts the FCH detection techniques' adaptability, scalability, and  
detection capability;  
\item {\textbf{Efficiently generating FCH test cases.}}
LLMs often answer correctly to simple, straightforward questions due to their extensive training on vast datasets. However, to effectively assess their reasoning capabilities, it is important to generate more complex questions, such as those involving intricate temporal characteristics, that require reasoning rather than just recalling facts. However, constructing such test cases is non-trivial. The challenge lies in designing questions that use familiar knowledge but involve reasoning patterns the LLM may not have been explicitly trained on. Creating such test cases efficiently while ensuring they probe reasoning skills in ways the model has not previously encountered is essential to uncovering latent hallucinations;
\item {\textbf{Validating the reasoning steps from LLM outputs.}} Even when LLMs finally produce correct answers, the outputs may not indicate an accurate reasoning process, potentially masking false understanding -- a source of FCH. Additionally, the quality of manual validation can differ based on human expertise. As a result, automatically validating reasoning processes, particularly those involving complex logical relationships, is inherently challenging. 
\vspace{1mm}
\end{enumerate}

To address the problems outlined above, this paper presents a novel automatic end-to-end metamorphic testing technique based on temporal logic for detecting FCH. To the best of our knowledge, we are the first to create a comprehensive FCH testing framework that utilizes factual knowledge reasoning and metamorphic testing, all seamlessly integrated into the fully automated tool, \tool. 

\tool begins by establishing a comprehensive factual knowledge base sourced through crawling information from accessible knowledge bases such as Wikipedia. Each piece of this knowledge acts as a ``seed'' for subsequent transformations. Leveraging logical operators to automatically generate temporal reasoning rules, we transform and augment these seeds and expand factual knowledge into a well-established set of question-answer pairs.
Using the questions and answers in the knowledge set as test cases and ground truth, respectively, we construct a reliable and robust FCH testing benchmark.

The experiment uses a series of carefully designed template-based prompts to test for FCHs in LLMs. To thoroughly evaluate the reasoning behind the responses, we instruct the LLMs not only to generate answers to the test cases but also to provide detailed justifications for their answers. To reliably identify FCH, we introduce two semantic-aware, similarity-based metamorphic oracles. These oracles extract the key semantic elements from each sentence and map out the logical relationships between them. By comparing the logical and semantic structures of the LLM's responses with the ground truth, the oracles can detect FCH by identifying significant deviations in the LLM's answers from the correct information.



We demonstrate the effectiveness of our approach through comprehensive experiments in multiple contexts. First, our evaluation involves deploying \tool across a wide range of topics drawn from a diverse selection of Wikipedia articles. Second, we test our framework on various open-source and commercial LLMs, thoroughly assessing its applicability and performance across different model architectures. 
Our key findings indicate that \tool succeeds in automatically generating practical test cases and identifying hallucination issues of nine LLMs across nine domains. 
Using these test sets, our experiments show that the rate of hallucination responses produced by various LLMs ranges from 24.7\% to 59.8\% for cases unrelated to temporal reasoning and 16.7\% to 39.2\% for cases requiring temporal reasoning. 
We then categorize these hallucination responses into \emph{erroneous knowledge hallucination} and \emph{erroneous inference hallucination}. 
Through an in-depth analysis, we unveil that the lack of logical reasoning capabilities contributes the most to the FCH issues in LLMs. 
Additionally, we observe that LLMs are particularly prone to generating hallucinations in test cases involving temporal concepts and out-of-distribution knowledge. 
Such an evaluation demonstrates that the 
test cases generated using 
logical reasoning rules can effectively trigger and detect LLM hallucinations.  

This paper builds upon the earlier version~\cite{DBLP:journals/pacmpl/LiL0SW024} by incorporating hallucination detection through temporal-logic-guided test generation. It includes additional motivational examples (\secref{sec:motivating}), a comprehensive set of reasoning rules for encoding \emph{Metric Temporal Logic} (MTL)~\cite{DBLP:conf/lics/OuaknineW05} formulae (\secref{sec:encoding_MTL}) and automatically generating temporal-logic-related question-answer pairs (\secref{prompt}), and further experimental studies (the {RQ4} at \secref{sec:eval}) that detect hallucinations due to insufficient temporal reasoning capabilities. The main contributions of this work are summarized as follows: 
\begin{itemize}[itemsep=1mm,leftmargin=0.35cm]
\item 
\textbf{A novel FCH testing framework.} 
To the best of our knowledge, 
we are the first to develop a novel testing framework based on logic programming and metamorphic testing to automatically detect FCH issues in LLMs. 
\item \textbf{An extensive benchmark based on factual knowledge.} 
To facilitate collaborative efforts and future advances in identifying FCH, 
the source code of \tool and constructed benchmark dataset are publicly available  \cite{drowzee}. 
\item \textbf{Test generation via temporal reasoning.} 
Our tool automatically generates test cases that provide a more comprehensive evaluation of LLMs in handling reasoning tasks and identifying factual inconsistencies. By applying temporal logic-based reasoning rules, we expand the initial seed data from our knowledge base, enhancing the diversity and complexity of the test scenarios. 

\item \textbf{Semantic-aware oracles for LLM answer validation.} We propose and implement two automated verification mechanisms, i.e., the oracles, that analyze the semantic structure similarity between sentences. These oracles are designed to validate the reasoning logic behind the answers generated by LLMs, hereby reliably detecting the occurrence of FCHs. 

\end{itemize}

\section{Motivating Examples}\label{sec:motivating}

\begin{figure*}[!ht]
\centering
\includegraphics[width=0.85\linewidth]{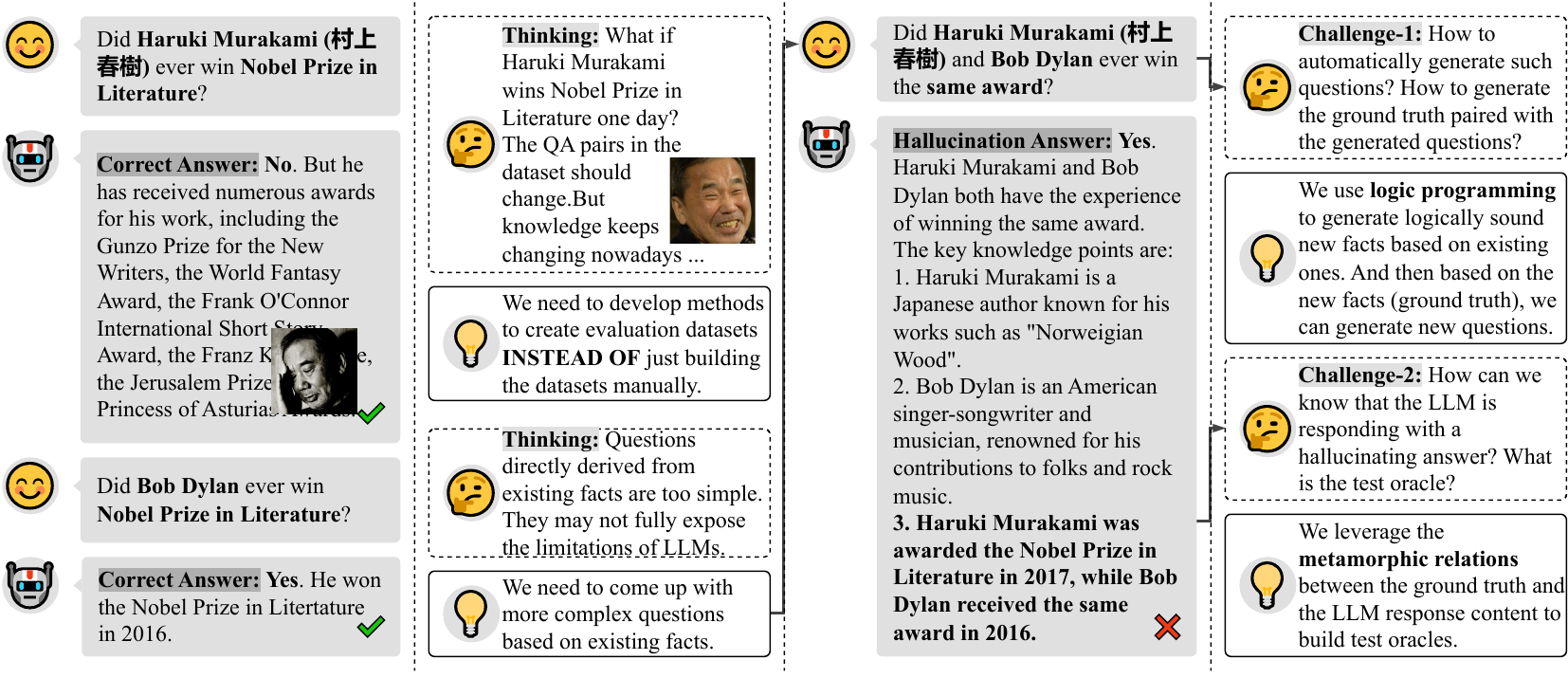}\\
\caption{Motivating Examples for Automatic Benchmark Construction with Complex Questions}
\label{fig:motivating}
\vspace{-0.1cm}
\end{figure*}



\subsection{Automatic Benchmark Construction}
As a first motivating example, shown in \figref{fig:motivating}, given 
the facts about whether Haruki Murakami and Bob Dylan have won the Nobel Prize, as illustrated in the left sub-figure, we can query straightforward questions such as ``\emph{whether Haruki Murakami or Bob Dylan has won the Nobel Prize?}''. 
Asking and verifying this knowledge requires no logical reasoning. 
However, such questions are often not enough to unveil hallucinations. 
{Therefore, more diversified questions, i.e., with intertwined and complex information, as illustrated in the right sub-figure, are needed.} 
{
Moreover, the knowledge landscape is dynamic, with new information continuously surfacing and older information becoming obsolete.} If facts change continuously over time, for instance, if Haruki Murakami were to win the Nobel Prize in the future, this would necessitate regular updates and corrections to the ground truth in existing datasets to reflect them. However, maintaining the accuracy of these benchmarks requires a significant amount of manual labor. 

\subsection{Questions Involving Temporal Reasoning}
As LLMs increasingly rely on temporal reasoning to process time-dependent data, understanding how well they can handle temporal logic is crucial for their development and deployment in real-world applications. 
Reasoning temporal-logic-related queries often requires complex steps, particularly regarding the relative timing of multiple events. Unfortunately, manually generated test cases frequently lack thorough completeness verification, which undermines their reliability. 
We demonstrate such an example in \tabref{tab:TemporalLogicRelatedHallucination}. 
The question pertains to the relative timing relationship between the original Ben 10 TV series and the year 2000. It is known that the original Ben 10 series premiered on December 27, 2005, and concluded in 2008. Therefore, when considering the query asking whether ``Ben 10 finally airs within the first three years after 2000'', the expected answer is ``No'', as none of the years from 2005 to 2008 fall within that queried timeframe. 
However, despite the accurate facts inputs, LLMs still generate hallucinations due to insufficient temporal reasoning.

\begin{table}[!b]
\centering
\renewcommand{\arraystretch}{1.1}
\begin{tabular}{l}
\toprule 
\rowcolor{mycolor} \textbf{\query:}  Did the Original Series of Ben 10 TV show finally air  \\
\rowcolor{mycolor} 
within the time frame of the 1st year to the 3rd year after the year \\
\rowcolor{mycolor}
2000? Please answer Yes or No and explain why. \\[0.5em] \hline
\\[-0.7em] 
\rowcolor{mycolor} 
\textbf{\hallucinationAnswer \rm{(Claude 3.5)}:} Yes, the original series \\
\rowcolor{mycolor}
of Ben 10 aired within the time frame of the 1st year to the 3rd    \\
\rowcolor{mycolor}
year after the year 2000. The show premiered on December 27,  \\
\rowcolor{mycolor}
2005, and concluded its run on April 15, 2008. This timeline 
\\
\rowcolor{mycolor}
clearly falls within the specified period from 2001 to 2003.
\\ 
\bottomrule 
\end{tabular}
\caption{Temporal Logic Reasoning Hallucination}
\label{tab:TemporalLogicRelatedHallucination}
\end{table}

\begin{figure}[!b]
\centering
\begin{lstlisting}[xleftmargin=3.5em,numbersep=8pt,basicstyle=\footnotesize\ttfamily] 
// Ground Facts crawled from Wikipedia
begin('Ben_10',2005). end('Ben_10',2008). 

// Generating the time-stamped fact
ben_10(Start,End) :- Start=<End, 
    begin('Ben_10',Start), end('Ben_10',End).
    
// Encding the MTL formula (*@\color{commentcolor}{$\mtl$}@*)
finally_ben_10_during_1_3(Start,End) :-
    ben_10(Start1,End1), 
    Start is (Start1-3), End is (End1-1), 
    (Start1-3)=<(End1-1), Start=<End.
    
// The time interval (*@\color{commentcolor}{$\interval$}@*) which satisfies (*@\color{commentcolor}{$\mtl$}@*)
?- finally_ben_10_during_1_3(Start,End).
   Start = 2002, End = 2007.
\end{lstlisting} 
\caption{Prolog Encoding for $\mtl \,{=}\, \mathcal{F}_{[1, 3]}(\m{Ben\_10})$}
\label{fig:prologRulesForFinally}
\end{figure}

In this work, our proposed testing framework automatically generates such temporal test cases in the form of MTL formulae, denoted by $\mtl$, which incorporate quantitative timing constraints and enable the expression of temporal relationships with precise intervals.   
For example, this query shown in \tabref{tab:TemporalLogicRelatedHallucination} is represented as  
$\mtl {\,=\,} \mathcal{F}_{[1, 3]}(\m{Ben\_10})$, where $\mathcal{F}$ stand for \emph{finally} and [1,3] is the time frame, querying the time intervals $\interval$ during which the event $\m{Ben\_10}$ finally happen within the time frame of the 1st year to the 3rd year. 
To obtain the ground truth interval $\interval$, \tool{} automatically generates the Prolog encoding rules for $\mtl$, as shown in \figref{fig:prologRulesForFinally}. These rules accurately determine that $\interval{\,=\,}[2002,2007]$, indicating that all time points within this interval satisfy $\mtl$. 
Then, the validity of 2000 is easily disproved because 2000 is not a member of $\interval$. 

It's important to note that such reasoning rules can be generated for arbitrarily nested MTL formulae. These rules lead to sound and deterministic conclusions regarding a ``Yes" or ``No" response, and they provide reasoning steps to assist in evaluating the LLMs' answers, especially when the queries become more complex. 
Furthermore, obtaining the ground truth interval directly enables us to flexibly control the generation of both positive and negative test cases. 

\section{Methodology}\label{sec:method}

\begin{figure}[!h]
\vspace{-2mm}
\centering
\includegraphics[width=1\linewidth]{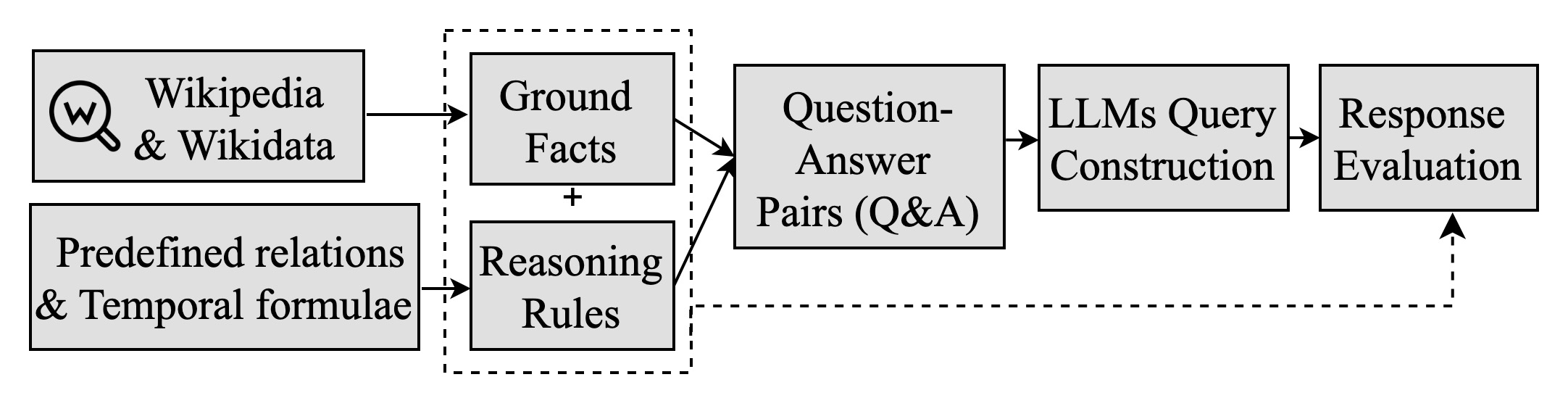}
\vspace{-5mm}
\caption{\tool Overview }
\label{fig:tool_overview}
\vspace{-1mm}
\end{figure}

\tool (\figref{fig:tool_overview}) is a general-purpose testing framework that evaluates the LLM outputs for automatically generated test cases. 
The inputs for the response evaluation
contain a natural language (NL) query for LLM and its ground truth answer obtained using logic programming (\secref{subsec3.1}).  
Based on voluminous knowledge database dumps, \tool extracts factual knowledge (\secref{knowledge}), which outputs a set of 
predicates
in the form of Prolog facts. 
Then, \tool deploys a set of pre-defined or automatically generated reasoning rules to
extend the database with a set of derived facts (\secref{sec:derive_more_facts}, \secref{sec:encoding_MTL}). 
These derived facts facilitate an automated test generation (\secref{prompt}), which outputs question-answer pairs (Q\&A) and concrete prompts for LLMs. 
Given the Q\&A pairs and the LLM outputs, \tool evaluates the responses from LLMs and detects factual inconsistency automatically (\secref{response}). 
To this end, it 
first parses LLM outputs semantic-aware structure, and evaluates their semantic similarity to the ground truth. 
Lastly, it develops similarity-based oracles that apply metamorphic testing to assess consistency against the ground truth.



\subsection{Preliminary}
\label{subsec3.1}

\begin{figure}[!b]
{
\vspace{-2mm}
\centering
\small
$
\arraycolsep=3pt\def\arraystretch{1}
\begin{array}{@{}lrcl}
\m{(Program)}&  \Prolog &{::=  } &
\widetilde{\relation} \,\plus\plus\,   \widetilde{\drule} 
\\
\m{(Rule)} &  \m{\drule} &{ ::=  } & 
\relation ~\hornarrow~ \widetilde{body}
\\[0.3em]
\m{(Body)} & \m{body} &{  ::=  } & 
{\tt{Pos}}~ \relation
\,\mid\, {\tt{Neg}}~ \relation 
\,\mid\, \pi
  \\
\m{(Predicate)} &  \relation &{  ::=  } &
 \m{\nm}\,(\widetilde{\entity}) 
\\[0.3em]
 \m{(Pure)}  &\pi &{::=}~&
{ T }
  \mid  F
 \mid  {\m{bop}(}{t_1, t_2}{)}
 \mid   {{\pi_1}}  {\wedge}  \pi_2
 \mid  {{\pi_1}} {\vee} \pi_2
 \mid  \neg\pi
\\[0.3em]
 \m{(Term)}  &t &{::=}~& c 
 \mid X 
 \mid t_1{\text{\ttfamily +}}t_2
 \mid t_1\text{-}t_2
\end{array}$
\caption{A Core Syntax of Prolog}
\label{fig:Syntax_of_Prolog}
}
\end{figure}

Logic programming allows the programmer to specify the rules and facts, enabling the Prolog interpreter to infer answers to the given queries automatically. 
We define a core syntax of Prolog in \figref{fig:Syntax_of_Prolog}. 
A Prolog program consists of two parts: a set of facts ($\widetilde{\relation}$) and a set of rules ($\widetilde{\drule}$). 
Throughout the paper, we use the over-tilde notation to denote a set of items. 
For example, $\widetilde{X}$ refers to a set of variables, i.e., $\{X_1, \dots, X_n\}$. 
A fact is represented as a relational predicate with a name and a set of entity arguments, where $\nm$ is an arbitrary distinct identifier drawn from a finite set of relations. 
Entities are drawn from the knowledge database, ranging from string types (for names or events) and integers (for time points).  
A Prolog rule is a Horn clause that comprises a head predicate and a set of body predicates placed on the left and right side of the arrow symbol ($\hornarrow$).

A rule means that the left-hand side is logically implied by the right-hand side. 
The rule bodies are either positive or negative relations, corresponding to the requirements upon the presence or absence of facts. 
We use ``$\relation$'' and ``$\shortNeg\,\relation$'' as abbreviations for
``${\tt{Pos}}~\relation$'' and ``${\tt{Neg}}~\relation$'', respectively. 
Rule bodies contain pure formulae and simplified and decidable sets of Presburger arithmetic predicates over local variables. 
The Boolean values of \emph{true} and \emph{false} are respectively indicated by $T$ and $F$. 
The binary operators $bop$ are from $\{ {<}, {\leq}, {=}, {\geq}, {>} \}$. 
Terms consist of constants (denoted by $c$), program variables (denoted by $X$ 
), or simple computations of terms, such as $t_1{\plus} t_2$ and $t_1\text{-}t_2$. 

\subsection{Factual Knowledge Extraction}
\label{knowledge} 
To facilitate an automated reasoning system, we extract the \emph{ground facts} in the structure of three-element predicates, i.e., $\m{\nm}\,(\Subj,\Obj)$, where ``$\Subj$'' (stands for $\m{subject}$) and ``$\Obj$'' (stands for $\m{object}$) are entities, and ``$\nm$'' is the name of the predicate. 
Here, we follow the convention of Prolog, where variable names must start with an uppercase letter, and any name that begins with a lowercase letter is a constant. 

Existing knowledge databases~\cite{freebase, DBpedia, Yago, WordNet} not only encompass a vast array of documents but also provide structured data, facilitating an ideal source for constructing a rich factual knowledge base. 
Thus, the genesis of our test cases is exclusively rooted in the entities and structured relations sourced from existing knowledge databases, ensuring a sophisticated and well-informed foundation for our testing framework. 
Specifically, we follow the categorization for entities (\figref{table:categories}) and relations (\figref{table:relations}) used by WikiPedia~\cite{DBpedia} to perform a thorough facts extraction. 
In particular, the {\small\textbf{Prop.}} (stands for properties) entry for relations guides the automated generation of reasoning rules detailed in \secref{sec:derive_more_facts}.


\begin{figure}[!b]
\vspace{-3mm}
\renewcommand{\arraystretch}{1.0}
\setlength{\tabcolsep}{2pt}
\footnotesize 
\begin{tabular}{l | l }
\Xhline{1.0\arrayrulewidth}
\textbf{Entity Cat.} & \textbf{Description}\\
        \Xhline{\arrayrulewidth}
        {Culture and the Arts} & Famous films, books, etc.\\ 
        {Geography and Places} & Countries, cities and locations. \\
        {Health and Fitness} & Diseases and genes. \\
        {History and Events} & Famous historical events, etc. \\
        {People and Self} & Important figures. \\
        {Mathematics and Logic} & Formulas and theorems. \\
        {Natural and Physical Sciences} & Celestial bodies and astronomy. \\
        {Society and Social Sciences} & Major social institutions, etc.\\ 
        {Technology and Applied Sciences} & Computer science, etc. \\
        \Xhline{1.5\arrayrulewidth} 
    \end{tabular}
\caption{{Entity Categorization.}}
\label{table:categories}
\end{figure}

\begin{figure}[!b]
\centering
\def\arraystretch{1.1}
\setlength{\tabcolsep}{2pt}
\footnotesize
\begin{tabular}{l | l | l}
\Xhline{1.5\arrayrulewidth}
\textbf{Relation Cat.} & \textbf{Examples}
& 
\textbf{Prop.} 
\\
\Xhline{1.5\arrayrulewidth}
        {Noun Phrase} & 
\begin{tabular}[l]{@{}l@{}} \textit{place\_of\_birth\,(barack\_obama, honolulu).}\\ \textit{genre\,(28\_days\_later, horror\_film).} \end{tabular}
        &
\begin{tabular}[l]{@{}l@{}} 
        $\RNeg$ \\
        $\RSym$ \\
        $\RTrans$
        \end{tabular}
        \\
        \hline
        \begin{tabular}[l]{@{}l@{}} Verb Phrase \\  
        (Passive Voice) \end{tabular}
        & \begin{tabular}[l]{@{}l@{}} \textit{killed\_by}\,\textit{(alexander\_pushkin}, \\ \quad  \textit{georges-charles\_de\_heeckeren\_d'anthès)}.\\ \textit{located\_in\_time\_zone\,(arizona, utc-07:00).}\\ 
        \end{tabular}
        &
\begin{tabular}[l]{@{}l@{}} 
        $\RNeg$ \\
        $\RInv$
        \end{tabular}
        \\
        \hline
        \begin{tabular}[l]{@{}l@{}} Verb Phrase \\  
        (Active Voice) \end{tabular}
        & \begin{tabular}[l]{@{}l@{}} \textit{follows\,(4769\_Castalia, 4768\_hartley).}\\ \textit{replaces\,(american\_broadcasting\_company,} \\ \qquad \quad\ \   \textit{nbc\_blue\_network).} \end{tabular}
        &
\begin{tabular}[l]{@{}l@{}}  
        $\RNeg$ \\
        $\RInv$
        \end{tabular}
        \\
        \Xhline{1.5\arrayrulewidth} 
    \end{tabular}
\caption{Relation Categorization.}
\label{table:relations}
\end{figure}

The facts extraction is done per-category basis, implementing a divide-and-conquer strategy, which efficiently integrates all the facts from all the categories. 
As shown in \algoref{alg:ground_truth}, for any given entity category and relation category, the function $\textsc{ExtractGroundFacts}$ iterates through all possible entities and relations. 
For each combination ($\m{entity}, \nm$), it queries the database using the $\textsc{QueryDB}$ function, which retrieves all three-element facts established with the specific predicate $\nm$ and the argument $\m{entity}$ placed either in the subject or the object position. 

{
\begin{algorithm}[!h]
\caption{Facts Extraction}
\label{alg:ground_truth}
\small
\begin{algorithmic}[1]
\Require  
Entity Category (\entityCat), Relation Category (\relationCat)
\Ensure Ground Facts ($\groundTruthTriples$)
\Function{ExtractGroundFacts}{
\entityCat, \relationCat}
\State$\groundTruthTriples \gets []$ \Comment{\commentstyle{Initialization}}
\For{$\m{entity}$ $\in$ \entityCat~} \Comment{\commentstyle{Iterate over each entity}}
\For{~$\nm$ $\in$  \relationCat~} \Comment{\commentstyle{Iterate over each relation}}
\State $\widetilde{\relation} \gets$ \Call{QueryDB}{$\m{entity}$, $\nm$} 
\Comment{\commentstyle{Retrieve ground facts}}
\State $\groundTruthTriples.\m{append}(\widetilde{\relation})$ \Comment{\commentstyle{Extend the ground facts}}
\EndFor
\EndFor
\State \Return $\groundTruthTriples$ \Comment{\commentstyle{Return the ground facts}}
\EndFunction
\end{algorithmic}
\end{algorithm}
}

\begin{figure}[!b]
\centering
\small
\begin{gather*}
\frac{
\begin{matrix}
\RNeg\\
{\drule}{=}\,\nm_{\m{new}}\,(\SUBJ, \OBJ){\hornarrow} !\nm\,(\SUBJ, \OBJ)
\end{matrix}
}{
\deriveRules{\nm}{\nm_{\m{new}}}{{\drule}}}
\ \  
\frac{
\begin{matrix}
\RInv\\
{\drule}{=}\,\nm_{\m{new}}\,(\SUBJ, \OBJ) {\hornarrow} \nm\,(\OBJ, \SUBJ)
\end{matrix}
}{
\deriveRules{\nm}{\nm_{\m{new}}}{{\drule}}}
\\[0.4em]
\frac{
\begin{matrix}
\RSym\\
{\drule}{=}\,\nm\,(\SUBJ, \OBJ) {\hornarrow} \nm\,(\OBJ, \SUBJ)
\end{matrix}
}{
\deriveRules{\nm}{\nm}{{\drule}}}
\quad\   
\frac{
\begin{matrix}
{\drule}{=}\,\nm\,(\SUBJ, \OBJ') {\hornarrow} 
\\ 
\nm\,(\SUBJ, \OBJ), 
\nm\,(\OBJ, \OBJ')
\end{matrix}
}{
\deriveRules{\nm}{\nm}{{\drule}}}  \RTrans
\end{gather*}
\caption{Deriving New Facts From the Known Facts}
\label{fig:basic_op_for_predicates}
\end{figure}

\vspace{-2mm}
\subsection{Deriving Simple Facts via Logical Reasoning}
\label{sec:derive_more_facts}
Based on the relation category, each predicate is labeled with a different set of properties, shown in \figref{table:relations}, which are mapped to different derivation rules. 

Based on the ground facts extracted from the databases, \tool derives additional facts to enrich the knowledge and generates test cases from each derived fact. 
As shown in \figref{fig:basic_op_for_predicates}, it provides four basic derivation rules, 
providing sound strategies to generate mutated facts from the ground facts. 
Note that these rules are also prevalently adopted in several literature~\cite{zhou2019completing, ren2020beta, liang2022reasoning, TIAN2022100159, abboud2020boxe} in the context of knowledge reasoning.
Given a predicate name $\nm$, the derivation  ``$\deriveRules {\nm}{\nm_{\m{new}}}{{\drule}}$'' holds if $\nm$ can be applied into a Prolog rule ${\drule}$, and produces more facts upon a new predicate with the name  $\nm_{\m{new}}$. 
These new predicates are freshly generated based on predefined suffixes.

As indicated in \figref{table:relations}, 
all the predicates can be applied to the $\RNeg$ rule, which derives the negated relations, e.g., ``!$\m{was}$'' using its positive counterpart, e.g., ``$\m{wasn't}$''. 
For the \emph{noun phrase} relations, both $\RSym$ and $\RTrans$ rules can apply, which generate more facts without creating new predicates. 
For the \emph{verb phrase} relations, both passive voice and active voice predicates can be applied to the  $\RInv$ rule, which captures the inverse relations, where the subject and object can be reversely linked through a variant of the original relation.  An example of such a rule is: 
\[
\m{influence(\OBJ, \SUBJ)}\hornarrow\m{influence\_by(\SUBJ, \OBJ)}
.\]

We summarize the fact derivation process using \algoref{alg:logic_reasoning}. Given any relation category, we iterate its predicates and generates the derivation rule $\drule$ (Line 4). 
For simplicity, we assume that the choice of which derivation rule to apply is predetermined. Based on this assumption, a new Prolog program is constructed, comprising ground facts and $\drule$. 
In particular, we use $\llbracket \relation \rrbracket_{\Prolog}$ to denote the query results of $\relation$ concerning the Prolog program $\Prolog$, and $\Prolog$ contains all the ground facts and the derivation rule. 
{Note that when $\relation$ contains no variables, it returns Boolean results indicating the presence of the fact; otherwise, it outputs all the possible instantiation of the variables. }
For each instantiation that contains one subject ``\Subj'' and one object ``\Obj'', we then compose them with the new predicate, which is taken as a  \emph{derived fact}.  
These derived facts are later used to generate NL test cases, detailed in \secref{prompt}.

{
\begin{algorithm}[!h]
\caption{Deriving New Facts}
\label{alg:logic_reasoning}
\small
\begin{algorithmic}[1]
\Require Ground Facts ($\groundTruthTriples$), Relation Category (\relationCat)
\Ensure Derived Facts ($\derivedFacts$)
\Function{DerivingFacts}{$\groundTruthTriples$, \relationCat}
\State $\derivedFacts \gets []$ \Comment{\commentstyle{Initialization}}
\For{$\nm$ in \relationCat}
\Comment{\commentstyle{Iterate each predicate}}
\State $\deriveRules{\nm}{\nm_{\m{new}}}{{\drule}}$\Comment{\commentstyle{Obtain the new predicate}} 
\State $\Prolog \gets \groundTruthTriples\plus\plus{\drule}$ \Comment{\commentstyle{Construct the prolog program}} 
\State $\m{instantiations} \gets \llbracket \nm_{\m{new}}(\SUBJ, \OBJ)\rrbracket_{\Prolog}$ 
\For{(\Subj, \Obj) in $\m{instantiations}$}
\Comment{\commentstyle{Iterate each entity tuple}}
\State $\relation_{\m{new}} \gets \nm_{\m{new}}(\Subj, \Obj)$ 
\Comment{\commentstyle{Construct the derived fact}}
\State $\derivedFacts.\m{append}(\relation_{\m{new}})$ \Comment{\commentstyle{Append the derived facts}}
\EndFor
\EndFor
\State \Return $\derivedFacts$ \Comment{\commentstyle{Return the derived facts}}
\EndFunction 
\end{algorithmic}
\end{algorithm}
}

\vspace{-2mm}
\subsection{Deriving Facts via Temporal Reasoning}
\label{sec:encoding_MTL}


Apart from the basic derivation rules, \tool can also automatically derive complex composition rules based on \emph{Metric Temporal Logic} (MTL) \cite{DBLP:conf/lics/OuaknineW05}. 
Specifically, we generate temporal test cases  based on randomly generated MTL formulae over historical events. 
We define the syntax for MTL formula in \figref{fig:syntax_of_the_metric_temporal_logic}, which contains the temporal operators for ``finally ($\mathcal{F}$)'', 
``globally ($\mathcal{G}$)'', 
``until ($\mathcal{U}$)'', 
and ``next ($\mathcal{N}$)''. 
The atomic propositions here are basic event relations $\nm$. 
The time intervals are pairs of natural numbers indicating the lower and upper bounds of the years
; and the constraint $\Istart \,{\leq}\, \Iend$ is enforced implicitly for all time intervals. 
In this paper for simplicity, we use discrete time measured in years as the smallest time interval. However, the framework can be extended to accommodate any smaller discrete time intervals, such as days or seconds.

\begin{figure}[!h] 
\vspace{-2.5mm}
\small
\centering
\begin{align*}
(\m{MTL})\quad & \mtl &{::=}\quad &
\nm 
\,{\mid}\, \mathcal{F}_\interval \,\mtl
\,{\mid}\, \mathcal{G}_\interval \,\mtl 
\,{\mid}\, \mtl_1  
\,\mathcal{U}_\interval \,  \mtl_2 
\,{\mid}\, \mathcal{N} \,\mtl
\,{\mid}\, 
\\
&&&
 \mtl_1  \,{\wedge}\, \mtl_2
\,{\mid}\, \mtl_1  \,{\vee}\, \mtl_2
\,{\mid}\, \neg \mtl 
\\[0.3em]
(\m{Time~Interval}) \quad& \interval &{::=}\quad & [\Istart, \Iend]
\end{align*}
\vspace{-4mm}
\caption{A Core Syntax of MTL}
\label{fig:syntax_of_the_metric_temporal_logic}
\vspace{-3mm}
\end{figure}

To facilitate the generation of temporal-based Q\&A pairs, we define the semantics model for the MTL formulae in \defref{def:semantics_MTL}, where the history is a set of facts. 
Here, we use $\history$ as a set of historical relations, 
e.g., ``$\nm\_{\m{TS}}(\interval, \Subj, \Obj)$'', which are the time-stamped version relations of the three-element relations ``$\nm(\Subj, \Obj)$'', derived by one of the following rules: \\[-0.5em]

\noindent 
{\small $\ \   
\nm\_{\m{TS}}(\interval, \Subj, \Obj) \hornarrow \nm(\Subj, \Obj), \m{start}(\Obj, n_1), \m{end}(\Obj, n_2), \interval{=}[n_1, n_2]. 
$}\\[-0.5em]

\noindent  {\small $\ \  \nm\_{\m{TS}}(\interval, \Subj, \Obj) \hornarrow \nm(\Subj, \Obj), \m{start}(\Subj, n_1), \m{end}(\Subj, n_2), \interval{=}[n_1, n_2].$}
\\

\noindent which construct the event intervals using the time stamps associated with the object or the subject, respectively. 
The ``$\m{start}$'' and ``$\m{end}$'' predicates are originally generated from the knowledge database and mark the starting and ending points of the duration of the object (or subject) event. 
For simplicity, we use ``$\nm\_{\m{TS}}(\interval)$'' to abbreviate ``$\nm\_{\m{TS}}(\interval, \Subj, \Obj)$'' when $\Subj$ and $\Obj$ are unambiguously unique from the context. 
We also use $\llbracket \nm\_{\m{TS}}(\interval) 
\rrbracket_{\history}$ to denote the validity of querying the presence of a fact $\nm\_{\m{TS}}(\interval)$ 
against the historical facts $\history$, which stores all the time-stamped three-element predicates.

\begin{definition}[A Point-based Semantics for MTL]
\label{def:semantics_MTL}
Given a set of (historical) facts $\history$, recording all the events that happened in history, an MTL formula $\mtl$, and a concrete time point  $\timepoint$, the satisfaction relation $(\history, \timepoint) \models \mtl$  (read at the time point \timepoint, the history $\history$ satisfies $\mtl$) is recursively defined as follows: 

{
\small
\begin{align*}
(\history, \timepoint) &\models 
\nm &\m{iff}&~ 
\m{\exists\,\interval}.~ 
\llbracket \nm\_{\m{TS}}(\interval) \text{$\rrbracket_{\history}$}{=}\m{true}
~\m{and}~
\timepoint\,{\in}\,\interval
\\[0.1em]
(\history, \timepoint) &\models \mathcal{F}_\interval \,\mtl & 
\m{iff}&~ 
\m{\exists\,\distance}.~\distance\,{\in}\,I  ~ \m{and}
~ (\history, \timepoint\plus\distance)\models\mtl
\\[0.1em]
(\history, \timepoint) &\models \mathcal{G}_\interval\,\mtl & 
\m{iff}&~ 
\m{\forall\,\distance}.~\distance\,{\in}\,I  ~ \m{and}
~ (\history, \timepoint\plus\distance)\models\mtl
\\[0.1em]
(\history, \timepoint) &\models \mathcal{N}\,\mtl & 
\m{iff}&~ 
(\history, \timepoint\plus 1)\models\mtl
\\[0.1em]
(\history, \timepoint) &\models\neg \mtl & \m{iff}&~
(\history, \timepoint)\not\models\mtl
\\[0.1em]
(\history, \timepoint) &\models \mtl_1 \, \mathcal{U}_\interval \,\mtl_2  & \m{iff}&~  \m{\exists\,\distance}.~ \distance\,{\in}\,\interval  ~ \m{and}~ (\history, \timepoint\plus\distance)\models\mtl_2 ~ \m{and}
\\[0.1em] 
&&& ~ 
\m{\forall}\, 
k~\m{with} ~\timepoint{<}k{<}(\timepoint\plus\distance), 
(\history, k)\models \mtl_1
\\[0.1em]
(\history, \timepoint) &\models\mtl_1 \, {\wedge} \,\mtl_2 & \m{iff}&~ (\history, \timepoint)\models\mtl_1 ~\m{and}~ (\history, \timepoint)\models\mtl_2
\\[0.1em]
(\history, \timepoint) &\models\mtl_1 \, {\vee} \,\mtl_2 & \m{iff}&~ (\history, \timepoint)\models\mtl_1 ~\m{or}~ (\history, \timepoint)\models\mtl_2 
\end{align*}}
\end{definition}
\vspace{2mm}

We randomly generate temporal test cases based on the rich set of historical events and the syntax templates defined in \figref{fig:syntax_of_the_metric_temporal_logic}. 
Each temporal question consists of a concrete MTL formula and a concrete time point, i.e., $(\phi, \timepoint)$. 
For example, the query ``\emph{At 1800, does Victorian era finally come within 40 years?}'' is represented as $(\mathcal{F}_{[0, 40]} \m{victorian\_era}, 1800)$. 
Next, we show how to obtain the expected answer by automatically generating Prolog reasoning rules. 

Given a query $\mtl$, the relation ``$\encoding{\mtl}{\nm}{\widetilde{\drule}}$'' holds if $\mtl$ can be translated into a set of Prolog rules, i.e., $\widetilde{\drule}$. 
Querying ``$\nm(\interval)$'' 
$\widetilde{\drule}$, against the known database facts yields a set of instantiation of the interval $\interval$. 
The validity of $\mtl$ at any given time point $\timepoint$ is then indicated by the existence of a concrete interval  $\interval$ such that $\timepoint\,{\in}\,\interval$. 
We define the full set of encoding rules for MTL operators in \figref{fig:encoding_rules_mtl}. 

These encoding rules deploy several auxiliary predicates: the 
``$\m{findall}(\interval, \nm)$'' relation indicates that $\interval$ is a union of all the time intervals which satisfy $\nm$; 
the  ``$\m{compl}(\interval, \interval_1)$'' relation indicates that time intervals $\interval$ and $\interval_1$ complement each other, and their union encompasses the entire set of time points; the union and intersection operations, denoted by $\cup$ and $\cap$, are applied to two sets of time intervals. 

\begin{figure}[!h]
\vspace{-2mm}
\vspace{0mm}
\begin{lstlisting}[xleftmargin=6em,numbersep=5pt,basicstyle=\footnotesize\ttfamily]
//nm1 = charles_dickens
charles_dickens_TS([1812, 1870]).
//nm2 = victorian_era
victorian_era_TS([1837, 1901]).
\end{lstlisting} 
\vspace{-1mm}
\caption{Database $\history_s$ Containing Two Time-stamped Events}
\label{fig:Prolog_encoding_Example}
\vspace{-2mm}
\end{figure}

Next, we illustrate the encoding rules for each MTL operator using a few examples. 
To facilitate the illustration, we use a small database $\history_s$ defined in \figref{fig:Prolog_encoding_Example}, which contains two facts: ``\emph{The author Charles Dickens was born in 1812 and he lived until 1870, which spanned a significant portion of the Victorian era}'' and ``\emph{The Victorian era started from 1837 until Queen Victoria died in 1901}'': 

\begin{figure}[!b]
\vspace{-3mm}
\centering\small
\begin{gather*} 
\input{encoding_rules/AP-R}
\\[0.6em]
\input{encoding_rules/Finally_encoding}
\\[0.6em]
\input{encoding_rules/Globally_encoding}
\\[0.6em]
\input{encoding_rules/Next_encoding}
\\[0.6em]
\input{encoding_rules/Until_encoding}
\\[0.6em]
\input{encoding_rules/Neg_encoding}
\\[0.6em]
\input{encoding_rules/Conj_encoding}
\\[0.6em]
\input{encoding_rules/Disj_encoding}
\end{gather*}
\caption{Encoding MTL Formula $\mtl$ using Prolog Rules}
\label{fig:encoding_rules_mtl}
\end{figure}

\begin{enumerate}[itemsep=0.7em,leftmargin=!,wide]
\item 
When 
$\mtl\,{=}\,\m{charles\_dickens}$ and $\timepoint\,{=}\,1800$: \\ 
According to the encoding rule $[\trans\text{-}\m{AP}]$, the generated Prolog rule is: $\m{\nm1(\interval)}\hornarrow\,\m{
charles\_dickens\_TS(\interval)}$.
Now, querying ``$\nm1(\interval)$'' against $\history_s$ yields $\interval\,{=}\,[1812, 1870]$. Since $1800\,{\not\in}\,\interval$, the expected result of this query is false.

Similarly, when 
$\mtl\,{=}\,\m{victorian\_era}$ and $\timepoint\,{=}\,1900$: \\ 
According to the encoding rule $[\trans\text{-}\m{AP}]$, the generated Prolog rule is: $\m{\nm2(\interval)}\hornarrow\,\m{
victorian\_era\_TS(\interval)}$.
Now, querying ``$\nm2(\interval)$'' against $\history_s$ yields $\interval\,{=}\,[1837, 1901]$. Since $1900\,{\in}\,\interval$, the expected result of this query is true. 

\item When $\mtl\,{=}\,\mathcal{F}_{[0, 40]}\,\m{victorian\_era}$ and $\timepoint\,{=}\,1800$: \\
According to the encoding rule $[\trans\text{-}\m{Finally}]$, the generated Prolog rule is: 
$\m{finally\_\nm2([n_1\text{-}40, n_2\text{-}0])}\hornarrow \m{
\nm2([n_1, n_2])}.$
Now, querying ``$\m{finally\_\nm2}(\interval)$'' against $\history_s$ yields $\interval\,{=}\,[1797, 1901]$. Since $1800\,{\in}\,\interval$, the expected result is true. 
Indeed, all the time points in $\interval$ satisfy the property: ``\emph{within 40 years, finally Victorian era came/still exist}''. 

\item When $\mtl\,{=}\,\mathcal{G}_{[30, 50]}\,\m{victorian\_era}$ and $\timepoint\,{=}\,1800$: \\
According to the rule $[\trans\text{-}\m{Globally}]$, the generated Prolog rule is: $\m{globally\_\nm2([n_1\text{-}30, n_2\text{-}50])}\hornarrow \m{
\nm2([n_1, n_2])}.$
Now, querying ``$globally\_\nm2(\interval)$'' against $\history_s$ yields $\interval\,{=}\,[1807, 1851]$. Since $1800\,{\not\in}\,\interval$, the expected result is false. 
Indeed, only all the time points in $\interval$ satisfy that ``\emph{Victorian era is globally true throughout the 30th to the 50th years in the future}''. 

\item When $\mtl\,{=}\,\mathcal{N}\,\m{victorian\_era}$ and $\timepoint\,{=}\,1836$: \\
According to the rule $[\trans\text{-}\m{Next}]$, the generated Prolog rule is: $\m{next\_\nm2([n_1\text{-}1, n_2\text{-}1])}\hornarrow \m{
\nm2([n_1, n_2])}.$
Now, querying ``$\m{next\_\nm2}(\interval)$'' against $\history_s$ yields $\interval\,{=}\,[1836, 1900]$. Since  $1836\,{\in}\,\interval$, the expected result is true. 
Indeed, all the time points in $\interval$ satisfy that ``\emph{next year Victorian era came/still exist}''. 

\item When $\mtl\,{=}\,\m{charles\_dickens}
~\mathcal{U}_{[10, 20]}\,\m{victorian\_era}$ and $\timepoint{=}1800$: This query aims to determine the time interval $\interval$ that encompasses all time points $\timepoint'$ for which there exists a future year ($\timepoint'\plus\distance$) when the Victorian era had begun; and during the time from $\timepoint'$ to $\timepoint'\plus\distance$, Charles Dickens must have been born and remained alive throughout. Lastly, check if $1800\,{\in}\,\interval$.

In $[\trans\text{-}\m{Until}]$, 
$\m{helper1}$ computes all the possible  $\timepoint'\plus d$  
such that $(\history, k)\models \m{charles\_dickens} ~\m{forall}~ 
k~\m{with} ~\timepoint'{<}k{<}(\timepoint'\plus\distance)$. 
Next $\m{helper2}$ computes the overlapping  intervals of $\m{helper1}$ and the intervals that also satisfy $(\history, \timepoint'\plus\distance)\,{\models}\,\m{victorian\_era}$. 
Then $\nm_f$ computes the interval of $\timepoint'$ which finally reach $\m{helper2}$ within 10 to 20 years. 
Lastly, the final answer of the interval of $\timepoint'$~is the intersection of $\nm_f$ and $\nm_1$. 
Therefore, given the concrete query $(\phi, \timepoint)$, from $[\trans\text{-}\m{Until}]$, 
the generated rules are shown in \figref{fig:until10-20-encoding}.

\begin{figure}[!h]
\vspace{-3mm}
{
\begin{align*}
&\m{helper1([n_1\plus10, n_2\plus1])}\hornarrow \m{\nm1([n_1, n_2])}.
& // [1822, 1871]
\\
&\m{helper2(\interval_1\cap\interval_2)}\hornarrow\m{helper1(\interval_1)}, \m{\nm2(\interval_2)}. 
& // [1837, 1871]
\\
& \m{\nm_f([n_1\text{-}20, n_2\text{-}10])}\hornarrow \m{
helper2([n_1, n_2])}.
& // [1817, 1861]
\end{align*}
\vspace{-4mm}
\[\m{charles}\_\m{until}\_\m{victorian\_era}(\interval_1\cap\interval_2) \hornarrow \m{\nm1(\interval_1)}, \m{\nm_f(\interval_2)}.\]}
\caption{Prolog Rules Generated for an "Until" Query}
\label{fig:until10-20-encoding}
\vspace{-1mm}
\end{figure}

Querying ``$\m{charles}\_\m{until}\_\m{victorian\_era}$'' against $\history_s$ yields $\interval\,{=}\,[1817, 1861]$. Since $1800\,{\not\in}\,\interval$, the expected result is false. 
Indeed, only all the time points in $\interval$ satisfy $\phi$ under the semantic definition of \emph{Until}, cf.  \defref{def:semantics_MTL}. For example when $\timepoint'{=}1817$, there exists $\distance{=}20$ such that $\phi$ holds; and when $\timepoint'{=}1861$ there exists $\distance{=}10$ such that $\phi$ holds. 

Note that, in this encoding, the interval of ``\emph{Until}'' operators does not include $[0, 0]$, as $\mtl_1  
\,\mathcal{U}_{[0, 0]} \,  \mtl_2$ essentially equals $\mtl_2$. Therefore, when the interval compasses $[0, 0]$, we use the following rule to decompose the encoding: (Note that when $\interval'\,{=}\,\interval{\setminus}[0, 0]$, it means $\interval'\cup[0, 0]\,{=}\,\interval$)
\begin{align*}
\frac{
[0, 0] \subseteq \interval 
\qquad 
\interval^\prime \,{=}\, \interval{\setminus}[0, 0]
}{
\mtl_1  
\,\mathcal{U}_\interval \,  \mtl_2 
\equiv  (\mtl_1  
\,\mathcal{U}_{\interval^\prime} \,  \mtl_2)  \vee \mtl_2 
} \ [\trans\text{-}\m{Until}\text{-}0]
\end{align*}

\vspace{2mm}
\item When $\mtl\,{=}\,\neg\,\m{victorian\_era}$ and $\timepoint\,{=}\,1800$: \\
By $[\trans\text{-}\m{Neg}]$, the generated Prolog rule is: 
$\m{neg\_\nm2(\interval)}\hornarrow$ 
$\m{findall}(\interval_1, \nm1), \m{compl}(\interval_1, \interval).$ 
Querying ``$\m{neg\_\nm2}(\interval)$'' against $\history_s$ yields $\interval\,{=}\,[1, 1836] \cup [1902, 2024].$
Here, we take all the after-century years to be the full set. 
Since $1800\,{\in}\,\interval$, the expected result is true. 
Indeed, all the time points in $\interval$ satisfy that ``\emph{Victorian era has not come/already passed}''.

\item When $\mtl\,{=}\,\m{charles\_dickens}\,{\wedge}\,\m{victorian\_era}$ and $\timepoint{=}1900$: 
By $[\trans\text{-}\m{Conj}]$, the generated Prolog rule is: 
$\m{\nm1\_and\_\nm2(\interval_1{\cap}\interval_2)}\hornarrow 
\m{findall}(\interval_1, \nm1), \m{findall}(\interval_2, \nm2)$. 
Now, querying ``$\m{\nm1\_and\_\nm2}(\interval)$'' against $\history_s$ yields $\interval\,{=}\,[1837, 1870]$. Since $1900\,{\not\in}\,\interval$, the expected result is false. 
Indeed, only the time points in $\interval$ satisfy that ``\emph{Victorian era exists and Charles Dickens is alive}''.

\item When $\mtl\,{=}\,\m{charles\_dickens}\,{\vee}\,\m{victorian\_era}$~and
$\timepoint{=}1900$: 
By $[\trans\text{-}\m{Disj}]$, the generated Prolog rule is as follows: 
$\m{\nm1\_or\_\nm2(\interval_1{\cup}\interval_2)}\hornarrow 
\m{findall}(\interval_1, \nm1), \m{findall}(\interval_2, \nm2)$. 
Now, querying ``$\m{\nm1\_or\_\nm2}(\interval)$'' against $\history_s$ yields $\interval\,{=}\,[1812, 1901]$. Since $1900\,{\in}\,\interval$, the expected result is true. 
Indeed, all the time points in $\interval$ satisfy that ``\emph{Victorian era exists or Charles Dickens is alive}''.  
\end{enumerate}
\vspace{3mm}

{\emph{\textbf{Remark.}}} 
While discrete-time MTL is commonly employed for model-checking timed verification~ \cite{DBLP:phd/us/Henzinger91}, utilizing Prolog to encode MTL for reasoning about the temporal relationships among events and detecting LLM hallucination is novel. 
These encoding rules can recursively accommodate the entire range of MTL formulas, including those with any level of nesting. 
We provide a formal definition for the correctness of these encoding rules in \theoref{ThemSoundAndComplete} and demonstrate that they are both sound and complete.

\begin{restatable}[Correctness of the encoding rules]{thm}{ThemSoundAndComplete}
\label{ThemSoundAndComplete}
~\\
Given any $\history$, 
$\mtl$, and 
$\encoding {\mtl}{\nm}{\widetilde{\drule}}$, let $\Prolog{=}\history \plus\plus \widetilde{\drule}$, we define,  \\
(1) Soundness: \\
$\forall\, \interval$.  
$\llbracket \nm(\interval) \rrbracket_{ \Prolog} {=} \m{true}$, then 
$\forall\, \timepoint\,{\in}\, \interval$, we have 
$(\history, \timepoint) \models \mtl$;  \\
(2) Completeness: \\ 
$\forall \,\timepoint\,$. $(\history, \timepoint) \models \mtl$, then $\exists\, \interval$. $\llbracket \nm(\interval) \rrbracket_{\Prolog} {=} \m{true}$  and $\timepoint\,{\in}\,\interval$. 
\end{restatable}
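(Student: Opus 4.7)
The plan is to prove both soundness and completeness simultaneously by structural induction on the MTL formula $\mtl$, following the case split induced by the encoding rules in Figure~\ref{fig:encoding_rules_mtl}. For each case I would exhibit the bijection (or, more precisely, the correspondence) between time points $\timepoint$ satisfying $(\history,\timepoint)\models\mtl$ and time points $\timepoint$ lying inside some interval returned by $\llbracket \nm(\interval)\rrbracket_{\Prolog}$. Because $\Prolog$ only appends fresh rules (introducing fresh predicate names $\nmNEW$) to $\history$, the queries for sub-encodings are unaffected by the outer rules, so the inductive hypothesis can be applied to each subformula independently.

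For the base case $\mtl=\nm$, rule $[\trans\text{-}\m{AP}]$ directly produces $\nmNEW(\interval)\hornarrow\nm\_{\m{TS}}(\interval)$, so the set of intervals $\llbracket\nmNEW(\interval)\rrbracket_\Prolog$ is exactly $\{\interval\mid\llbracket\nm\_{\m{TS}}(\interval)\rrbracket_\history=\m{true}\}$, which matches the semantic clause for atomic propositions verbatim. For the unary shift cases $\mathcal{F}_\interval$, $\mathcal{G}_\interval$, and $\mathcal{N}$, I would verify that the arithmetic on the endpoints in the generated rule precisely inverts the semantic quantifier: for $\mathcal{F}_{[n_s,n_e]}\mtl$ the encoded interval $[\interval'_\m{start}{-}n_e,\,\interval'_\m{end}{-}n_s]$ is exactly the set of $\timepoint$ such that there exists $\distance\in[n_s,n_e]$ with $\timepoint{+}\distance\in\interval'$; dually for $\mathcal{G}$ the intersection over $\distance$ collapses to a single shifted interval because $\interval'$ is itself an interval. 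The Boolean cases $\wedge,\vee$ reduce to intersection/union on the collected interval sets via $\m{findall}$, which match the semantic clauses pointwise, and the negation case reduces to interval complement via $\m{compl}$, which is sound and complete provided we fix a global time domain (the same domain used implicitly in $\history$).

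The hardest case is $[\trans\text{-}\m{Until}]$, and this is where I expect to spend most of the proof effort. I would first apply the IH to $\mtl_1$ and $\mtl_2$ to get $\nm_1$ and $\nm_2$ faithfully representing their satisfying intervals. Then I would argue about the three auxiliary predicates in turn: $\m{helper1}$ shifts each maximal interval $\interval'$ on which $\mtl_1$ holds into the set of candidate ``meeting points'' $\timepoint{+}\distance$ reachable from some $\timepoint$ while staying inside $\interval'$; $\m{helper2}$ intersects these with intervals on which $\mtl_2$ holds, yielding exactly the candidate witnesses $\timepoint{+}\distance$; and the recursive $\mathcal{F}_\interval$ encoding together with the final intersection with $\nm_1$ reconstructs the set of $\timepoint$ for which a witness $\distance\in\interval$ exists and the $\mtl_1$-continuity condition is preserved. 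The subtlety here is the boundary handling at $\distance=0$, which is why the separate rewrite rule $[\trans\text{-}\m{Until}\text{-}0]$ is needed; I would treat this as a separate sub-lemma showing $\mtl_1\,\mathcal{U}_\interval\,\mtl_2 \equiv (\mtl_1\,\mathcal{U}_{\interval\setminus[0,0]}\,\mtl_2)\vee\mtl_2$ directly from Definition~\ref{def:semantics_MTL}, so that after this rewrite only the case $[0,0]\not\subseteq\interval$ remains.

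One technical obligation cutting across all cases is that the encoded rules return \emph{interval} answers rather than individual time points, whereas the semantics is point-based. I would discharge this by proving a small invariant: for every subformula $\mtl$ and every encoding $\encoding{\mtl}{\nm}{\widetilde{\drule}}$, the set $\bigcup\{\interval\mid\llbracket\nm(\interval)\rrbracket_\Prolog=\m{true}\}$ equals the semantic satisfaction set $\{\timepoint\mid(\history,\timepoint)\models\mtl\}$. Soundness and completeness then follow as the two inclusions of this set equality. The main obstacle, beyond the Until case, will be making the interval-arithmetic manipulations (shift, intersection, union, complement) precise enough that the inductive step goes through without hand-waving over how Prolog's $\m{findall}$ aggregates overlapping intervals; I would formalize this by canonicalizing each predicate's answer set into a union of disjoint maximal intervals before applying the inductive hypothesis.
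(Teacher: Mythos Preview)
Your proposal is correct and takes essentially the same approach as the paper: the paper's proof (in the main body) simply states ``By structural induction over $\phi$'' and defers the case analysis to an appendix, which is exactly the strategy you outline. Your proposal in fact supplies considerably more detail than the paper's main text---in particular, your set-equality invariant $\bigcup\{\interval\mid\llbracket\nm(\interval)\rrbracket_\Prolog=\m{true}\}=\{\timepoint\mid(\history,\timepoint)\models\mtl\}$ as the simultaneous induction hypothesis, your identification of the $\mathcal{U}$ case as the crux, and your remark about canonicalizing into disjoint maximal intervals before applying the IH---all anticipate issues that the paper's one-line proof leaves implicit.
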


\begin{proof}
By structural induction over $\phi$. 
The detailed proofs are given in the Appendix. 
\end{proof}

\begin{table*}[!t]
\setlength{\tabcolsep}{1pt}
\centering
\caption{Relation-Template Mapping Patterns.}
\label{table:template}
\footnotesize
\begin{tabular}{l l}
\toprule 
\textbf{Relation} & \textbf{Template Examples}  \\
    \midrule
{Noun Phrase} & \begin{tabular}[l]{@{}l@{}} - Is it true that 
$\langle \m{Subject}\rangle$ and 
$\langle\m{Object}\rangle$ share 
$\langle\m{Relation}\rangle$? 
\\ - $\langle\m{Subject}\rangle$ and $\langle\m{Object}\rangle$ have/made/shared totally different $\langle\m{Relation}\rangle$. Please judge the truthfulness of this statement.
    \end{tabular}  \\
    \midrule
    \begin{tabular}[l]{@{}l@{}} Verb Phrase \\ (Passive Voice) \end{tabular} & \begin{tabular}[l]{@{}l@{}} - Is it true that $\langle \m{Subject}\rangle$ is/was/are/were $\langle \m{Relation}\rangle$ $\langle\m{Object}\rangle$? \\ - It is impossible for $\langle \m{Subject}\rangle$ to be $\langle\m{Relation}\rangle$ $\langle\m{Object}\rangle$. Am I right?
    \end{tabular}  \\
    \midrule
\begin{tabular}[l]{@{}l@{}} Verb Phrase \\ (Active Voice) \end{tabular}
 & \begin{tabular}[l]{@{}l@{}} - Is it true that 
 $\langle \m{Subject}\rangle$
 $\langle\m{Relation}\rangle$
 $\langle\m{Object}\rangle$?  \\ - $\langle \m{Subject}\rangle$ $\langle\m{Relation}\rangle$ $\langle\m{Object}\rangle$. 
 \end{tabular}  \\

    \bottomrule 
\end{tabular}
\end{table*}

\begin{table*}[!t]
\setlength{\tabcolsep}{3pt}
\centering
\caption{Temporal-Template Mapping Patterns (implicitly querying upon year $\iyear$).}
\label{table:temporal_template}
\footnotesize
\begin{tabular}{l  l }
\toprule 
\textbf{MTL Formulae} & \textbf{Template Examples}  \\
\midrule 
\mtltoNL($\nm$) &  Did $\langle\nm\rangle$ happen at year $\langle \iyear \rangle$? 
\\  \midrule
\mtltoNL($\mathcal{F}_\interval \,\mtl$) & Did ``Event'' finally happen within the time frame of $\langle \interval \rangle$ after the year $\langle \iyear \rangle$, where ``Event'' is defined as $\langle \mtltoNL(\mtl)\rangle$? 
\\ 
\midrule 
\mtltoNL($\mathcal{G}_\interval \,\mtl$) &  Did ``Event'' globally happen within the time frame of $\langle \interval \rangle$ after the year $\langle \iyear \rangle$, where ``Event'' is defined as $\langle \mtltoNL(\mtl)\rangle$?
\\ 
\midrule 
\mtltoNL($\mathcal{N} \,\mtl$) & Did ``Event'' happen in the next year of $\langle \iyear \rangle$, where ``Event'' is defined as $\langle \mtltoNL(\mtl)\rangle$? 
\\ 
\midrule 
\mtltoNL($\mtl_1 \, \mathcal{U}_\interval \,\mtl_2$) &  
\begin{tabular}[l]{@{}l@{}} 
Did ``Event$_1$'' happen continuously until ``Event$_2$'' started, during the period $\langle \interval \rangle$ after the year $\langle \iyear \rangle$, \\ 
where ``Event$_1$'' is $\langle \mtltoNL(\mtl_1) \rangle$ and ``Event$_2$'' is $\langle \mtltoNL(\mtl_2) \rangle$?
\end{tabular}
\\ 
\midrule 
\mtltoNL($\mtl_1  \,{\wedge}\,  \mtl_2$) &  Did both ``Event$_1$'' and  ``Event$_2$'' happen at year $\langle \iyear \rangle$, where ``Event$_1$'' is $\langle \mtltoNL(\mtl_1) \rangle$ and ``Event$_2$'' is $\langle \mtltoNL(\mtl_2) \rangle$? 
\\ 
\midrule 
\mtltoNL($\mtl_1  \,{\vee}\,  \mtl_2$) &  Did either ``Event$_1$'' or ``Event$_2$''  happen at year $\langle \iyear \rangle$, where ``Event$_1$'' is $\langle \mtltoNL(\mtl_1) \rangle$ and ``Event$_2$'' is $\langle \mtltoNL(\mtl_2) \rangle$? 
\\ 
\midrule 
\mtltoNL($\neg \mtl $) &  Did ``Event'' not happen at year $\langle \iyear \rangle$, where ``Event'' is $\langle \mtltoNL(\mtl) \rangle$? 
\\ 
\bottomrule 
\end{tabular}
\end{table*}

\subsection{Benchmark Construction}
\label{prompt}
\tool constructs question-answer~(Q\&A) pairs and prompts to facilitate the testing for FCH. 
To address the challenge of the high human efforts required in the test oracle generation, we design an automated approach based on mapping relations between various entities to problem templates, largely reducing reliance on manual efforts. 

\textbf{\emph{Question Generation.}}
To facilitate efficient and systematic generation of test cases and prompts, we have adopted a method that leverages entity relationships and mappings of temporal operators to predefined Q\&A templates. 

When constructing relation-based Q\&A templates (without temporal operators), one key aspect lies in aligning various types of relations with the corresponding question templates from the mutated triples, i.e., the predicate type in the triple. Different relation types possess unique characteristics and expressive requirements, leading to various predefined templates. 
As listed in \tabref{table:template}, we map the relation types to question templates based on speech and the grammatical tense of the predicate to guarantee comprehensive coverage. 

When constructing temporal-logic-related queries, we define a mapping pattern for each temporal operator, as outlined in \tabref{table:temporal_template}. For any query expressed as ``$\mtl$''  with any concrete year $\iyear$ in query, the $\mtltoNL(\mtl)$ function converts the MTL formula $\mtl$ into a natural language query. In this context, $\mtltoNL(\mtl')$ is called recursively to generate the natural language description for the CTL subformula $\mtl'$.

In both mapping patterns, we enhance the construction of the naturally formatted questions by leveraging an LLM to reformulate the structure and grammar of the Q\&A pairs. 

\textbf{\emph{Answer Generation.}}
We note that the answer to the corresponding question is readily attainable from the factual knowledge and the Prolog reasoning rules, defined in both \figref{fig:basic_op_for_predicates} and \figref{fig:encoding_rules_mtl}. 
Primarily, it is easy to determine whether the answer is \emph{true} or \emph{false} based on the mutated triples and the ground-truth time intervals using temporal reasoning. 
Meanwhile, mutated templates with positive and negative semantics via the usage of synonyms or antonyms, which greatly enhance the diversity of the questions, can be treated similarly as the negation rule defined in \figref{fig:basic_op_for_predicates}. 
Specifically, if the answer to a question with original semantics is Yes/No, then for a question with mutated opposite semantics, the corresponding answer would 
be the opposite, i.e., No/Yes. For example, after obtaining the original Q\&A pair \textit{- ``Is it true that Crohn's disease and Huntington's disease could share similar symptoms and signs? - Yes.''}, we can use antonyms to mutate it into \textit{- ``Is it true that Crohn's disease and Huntington's disease have different symptoms and signs? - No.''}

\textbf{\emph{Prompt Construction.}}
As illustrated in \tabref{table:prompt}, before initiating any interaction with LLMs, we predefine specific instructions and prompts, requesting the model to utilize its inherent knowledge and inferential capabilities to deliver explicit (Yes/No/I don't know) judgments on our queries. Additionally, we instruct the model to present its reasoning process in a template following the judgment. The main goal is to ensure that LLMs give easy-to-understand responses using standardized prompts and instructions. 
This approach also helps the model to effectively exercise its reasoning abilities based on the given instructions and examples.

\begin{table*}[!t]
    \setlength{\tabcolsep}{1ex}
	\centering
	\small
	\caption{Prompt Template. 
    }
        \label{table:prompt}
        \vspace{-0.1cm}
	\begin{tabular}{l}
    \toprule 
    \rowcolor{mycolor}
    \textbf{\instruction:} Answer the question with your knowledge and reasoning power.\\
    \midrule
    \rowcolor{mycolor} \textbf{\query (Relation):}  Given the $\langle \textit{question} \rangle$: \textit{question}, please provide an answer with your knowledge and reasoning power.\\ 
    \rowcolor{mycolor} Think of it step by step with a human-like reasoning process. After giving the answer, list the knowledge used in your\\ 
    \rowcolor{mycolor} reasoning process in the form of declarative sentences and point by point. The answer must contain `Yes', `No' or `I \\
    \rowcolor{mycolor} don't know' at the beginning. \\
    \midrule
    \rowcolor{mycolor} \textbf{\query (Temporal):}  Given the question: 
    $\langle \textit{question} \rangle$, please provide an answer with your knowledge and reasoning power \\
    \rowcolor{mycolor}  upon metric temporal logic. Think it step by step with a human-like reasoning process. After giving the answer, list the \\
    \rowcolor{mycolor} evidence from your temporal reasoning  in the form of declarative sentences and point by point. The answer must contain   \\
\rowcolor{mycolor} `Yes', `No' or `I don't know' at the beginning.\\
    \bottomrule 
\end{tabular}
\end{table*}

\vspace{-1mm}
\subsection{Response Evaluation}\label{response}

The objective of this module is to enhance the detection of FCH in LLM outputs, specifically focusing on identifying the discrepancies between LLM responses and the verified ground truths. Recognizing the inherent limitation in directly accepting ``Yes'' or ``No'' answers from LLMs, our approach underscores the automated detection of factual consistency during the reasoning process presented by LLMs. 
Our approach is structured around several critical steps, as listed in \algoref{alg:eval} and detailed below:
\begin{enumerate}[wide,  labelindent=9pt]
\item \textbf{Preliminary Screening.} Given the LLM response $\llmResponse$, we first eliminate scenarios when the LLM declines to provide an answer, indicated by the ``answer'' field of LLM's responses. 
Most of these responses arise because the LLM lacks the relevant knowledge for the reasoning process. As these responses adhere to the LLM's principle of honesty, we categorize them as correct and normal responses.

\item \textbf{Response Parsing and Semantic Structure Construction.} Provided with the remaining suspicious responses from $\llmResponse$ and ground truth facts $\groundTruthTriples$, we use \textsc{ExtractTriple} function to extract triples that follow the same structure as the fact defined in the \secref{subsec3.1}. For each LLM response, the extracted triples ($\widetilde{\m{Trpl}}$) are based on the statements contained in the \textit{reasoning process} part of the LLM's response, which is further utilized to construct a response semantic structure $\semantic_{\m{resp}}$ using the \textsc{BuildGraph} function. In this structure, the $\widetilde{\entity}$ are depicted as \emph{nodes} ($\m{N}$), and the relational predicate ($\nm$) between them are illustrated as \emph{edges} ($\m{E}$). Concurrently using the same approach, we construct another semantic structure $\semantic_{\m{ground}}$ using $\groundTruthTriples$.

\item \textbf{Similarity-based Metamorphic Testing and Oracles.} 
We apply metamorphic relations to identify hallucination answers from LLMs, i.e., comparing the similarity between semantic structures generated by LLMs and the ground truth counterparts. Note that we provide four classifications: correct responses ($\m{CO}$), hallucinations caused by error inference ($\m{EI}$), hallucinations caused by erroneous knowledge ($\m{EK}$), and hallucinations containing both issues ($\m{OL}$). 
Specifically, the oracles for metamorphic testing can be divided into the following types:
 
\end{enumerate}

\begin{algorithm}[!b]
\caption{Response Evaluation}
\label{alg:eval}
\small
\begin{algorithmic}[1]
\Require LLM Response ($\llmResponse$), Ground Facts ($\groundTruthTriples$), Threshold ($\theta_{\m{e}}, \theta_{\m{n}}$)
\Ensure Evaluation Result Category~($CO, EK, EI, OL$)
\Function{EvaluateResponse}{$\llmResponse$, $\groundTruthTriples$, $\theta_{\m{e}}$, $\theta_{\m{n}}$}
    \State $CO, EK, EI, OL \gets []$ \Comment{\commentstyle{Initialization}}
    \If{$\llmResponse.answer = refusal$}
        \State
        $CO.\m{append}(\llmResponse)$ \Comment{\commentstyle{Preliminary Screening}}
    \Else
        \State $\widetilde{\m{Trpl}} \gets$ \Call{ExtractTriple}{$\m{Resp.reasoning}$} 
        \State $\semantic_{\m{resp}}, \semantic_{\m{ground}} \gets$ \Call{BuildGraph}{$\widetilde{\m{Trpl}}, \groundTruthTriples$} 
        \State $\m{s}_{\m{e}} \gets$ $\similarity_\m{e}${$(\semantic_{\m{resp}}$, $\semantic_{\m{ground}})$} \Comment{\commentstyle{Calculate edge similarity}}
        \State $\m{s}_{\m{n}} \gets$ $\similarity_\m{n}${$(\semantic_{\m{resp}}$, $\semantic_{\m{ground}})$} \Comment{\commentstyle{Calculate node similarity}}
        \If {$s_e < \theta_{e}$ and $s_n < \theta_{n}$}
            \State
            $OL.\m{append}(\llmResponse)$  \Comment{\commentstyle{Append  overlapped cases}}
        \ElsIf{$\m{s}_{\m{e}} < \theta_{\m{e}}$}
            \State 
            $EI.\m{append}(\llmResponse)$  \Comment{\commentstyle{Append error inference}}
        \ElsIf{$\m{s}_{\m{n}} < \theta_{\m{n}}$}
            \State 
            $EK.\m{append}(\llmResponse)$  \Comment{\commentstyle{Append error knowledge}}
        \Else
            \State
            $CO.\m{append}(\llmResponse)$
            \Comment{\commentstyle{Append correct response}}
        \EndIf
    \EndIf
    \State \Return $CO, EK, EI, OL$ 
\EndFunction
\end{algorithmic}
\end{algorithm}

\textbf{Edge Vector Metamorphic Oracle ($MO_E$)}: This oracle is based on the similarity of edge vectors between $\semantic_{\m{resp}}$ and $\semantic_{\m{ground}}$. If the vector similarity ($\m{s}_{\m{e}}$) between the edges in the $\semantic_{\m{resp}}$ and those in $\semantic_{\m{ground}}$ falls below a predetermined threshold $\theta_{\m{e}}$, it indicates that the LLM's answer significantly diverges from the ground truth. This suggests the presence of an FCH, and vice versa. 
More specifically, we utilize \emph{Jaccard Similarity}~\cite{J_S} to calculate the similarity score between edge vectors extracted from $\semantic_{\m{resp}}$ and  $\semantic_{\m{ground}}$. 
$$
\similarity_{\m{e}}(\semantic_{\m{resp}}, \semantic_{\m{ground}}) = \frac{|\widetilde{E}_{\m{resp}} \cap \widetilde{E}_{\m{ground}}|}{|\widetilde{E}_{\m{resp}} \cup \widetilde{E}_{\m{ground}}|}, $$check if $  \similarity_{\m{e}}(\semantic_{\m{resp}}, \semantic_{\m{ground}})  < \theta_{\m{e}} \ 
$~
where $\widetilde{E}_{\m{resp}}$ and $\widetilde{E}_{\m{ground}}$ denote the set of edges extracted from $\semantic_{\m{resp}}$ and $\semantic_{\m{ground}}$. 



\textbf{Node Vector Metamorphic Oracle ($MO_N$)}: This relation examines the similarity of node vectors between $\semantic_{\m{resp}}$ and $\semantic_{\m{ground}}$. 
Defined in a similar manner as $MO_E$, if the node similarity between the nodes ($\m{s}_{\m{n}}$) in the $\semantic_{\m{resp}}$ and those in $\semantic_{\m{ground}}$ falls below a predetermined threshold $\theta_{\m{n}}$, it indicates that the LLM's answer significantly diverges from the ground truth, and vice versa.
$MO_N$ can be captured by the Jaccard Similarity, defined as follows:

$$\similarity_{\m{n}}(\semantic_{\m{resp}}, \semantic_{\m{ground}}) = \frac{|N_{\m{resp}} \cap N_{\m{ground}}|}{|N_{\m{resp}} \cup N_{\m{ground}}|}, $$check if $
\similarity_{\m{n}}(\semantic_{\m{resp}}, \semantic_{\m{ground}})  < \theta_{\m{n}}  
$~
where $N_{\m{resp}}$ and $N_{\m{ground}}$ denotes the set of nodes extracted from $\semantic_{\m{resp}}$ and $\semantic_{\m{ground}}$.
Note that when determining the joint and union of the edges/nodes sets, we consider two edges/nodes as identical if their corresponding entities are identical or synonymous, and vice versa.

\section{Evaluation}\label{sec:eval}
Our evaluation targets the following research questions:

\begin{itemize}[wide]
\item \textbf{RQ1} (Effectiveness): How effective is \tool for in generating test cases and identifying LLM FCH issues?

\item \textbf{RQ2} (Hallucination Categorization and Analysis): What are the categorizations of LLM FCH issues? 

\item \textbf{RQ3} (Ablation Study): Whether the four types of logic reasoning rules 
can identify LLM FCH issues independently? 

\item \textbf{RQ4} (FCH Caused by Temporal Reasoning): 
How effective is \tool in detecting temporal-logic-related hallucinations; what are their categorizations; 
and which temporal operators trigger the most/least hallucinations? 
\end{itemize}



\subsection{Experimental Setup}\label{sec:ex_setup}
\noindent \textbf{Knowledge Extraction.} 
We use Wikipedia and Wikidata as sources to extract entities and structured information as base factual knowledge. After downloading the latest Wikipedia dump, we employ wikiextractor~\cite{Wikiextractor2015} to extract relevant text from Wiki pages. In parallel, we invoke Wikidata's SPARQL~\cite{sparql} query module to extract facts. 
Through data processing involving simplification and filtration, we amass a collection of basic factual knowledge encompassing 54,483 entities and 1,647,206 facts. 

\noindent \textbf{Logic Reasoning Processor.} 
For the logic reasoning module, we apply SWI-Prolog~\cite{wielemaker2012swi}, an open-source advanced logical programming interpreter. To effectively prevent errors due to excessive stacked strings, and ensure the proper operation of the logical processor when inserting a large number of facts into Prolog, we employ a sampling method and extract a subset of entities to form a query. 

\noindent \textbf{Models Under Test.} 
To guarantee a reliable evaluation for the RQs, we evaluate nine state-of-the-art large language models with \tool. Considering the diverse nature of LLMs, we select two distinct categories: the first category comprises API-accessible models with closed-source architecture including ChatGPT (gpt-3.5-turbo-0613), GPT-4, and GPT-4o~\cite{OpenAI2023GPT4TR}, and the second category consists of open-source LLMs with deployability, including Llama2-7B-chat-hf, Llama2-70B-chat-hf~\cite{touvron2023llama}, Mistral-7B-Instruct-v0.2~\cite{jiang2023mistral}, Mixtral-8x7B-Instruct-v0.1~\cite{jiang2024mixtral}, Llama3.1-8B-Instruct, and Llama3.1-70B-Instruct~\cite{llama32024tr}.  

\noindent \textbf{Response Validation Threshold $\theta$.} 
We apply StanfordOpenIE~\cite{angeli-etal-2015-leveraging} for knowledge triple extraction from LLM responses and then use SentenceBERT~\cite{reimers2019sentence} to calculate the vector similarity of nodes and edges from the constructed semantic structures. We also utilize GPT-4o to extract triples for some complex responses that StanfordOpenIE cannot handle effectively. Here, we set the threshold to 0.8, considering knowledge triples as semantically equivalent if they exceed this threshold and vice versa. To determine the threshold value, we sample 30 test cases and corresponding LLM responses from each of the nine knowledge domains listed in \figref{table:categories}. Through this analysis, we find that by setting the threshold values for both $\theta_\m{e}$ and $\theta_\m{n}$ at 0.8, with the given 270 test cases that are correctly classified, we can estimate the true positives among all test cases through \textit{Laplace's approach in the Sunrise problem}~\cite{laplace1951philosophical}, resulting in 99.6\% when detecting non-equivalent LLM answers as FCHs. In other words, all instances where an LLM's answer has a semantic similarity score below 0.8 compared to the ground truth were correctly identified as FCH cases. 

\noindent \textbf{Running Environment.} 
Our experiments are conducted on a server running Ubuntu 22.04 with two 64-core AMD EPYC 7713, 512 GB RAM, and two NVIDIA A100 PCIe 80GB GPUs. Our experiments consume a total of 240 GPU hours. 


\subsection{RQ1: Effectiveness}
To reveal the effectiveness of \tool, we evaluate the statistics of test cases generated by \tool and then evaluate the capabilities of identifying LLM FCH issues with the generated test cases. 
To further assess the effectiveness of test cases for uncovering FCH issues in specific knowledge domains, we evaluate the performances of LLMs on test cases across various knowledge domains.

\head{Effectiveness on Generating (Non-Temporal) Q\&A Test Cases.} We apply \tool to generate a Q\&A test benchmark, amounting to a comprehensive total of 7,200 test cases, designed to provide a broad and detailed evaluation of LLM FCH issues across specific knowledge domains. 

\begin{figure}[!t]
\hspace{-4mm}
\centering
\begin{subfigure}{}
\centering
\includegraphics[width=1\linewidth]{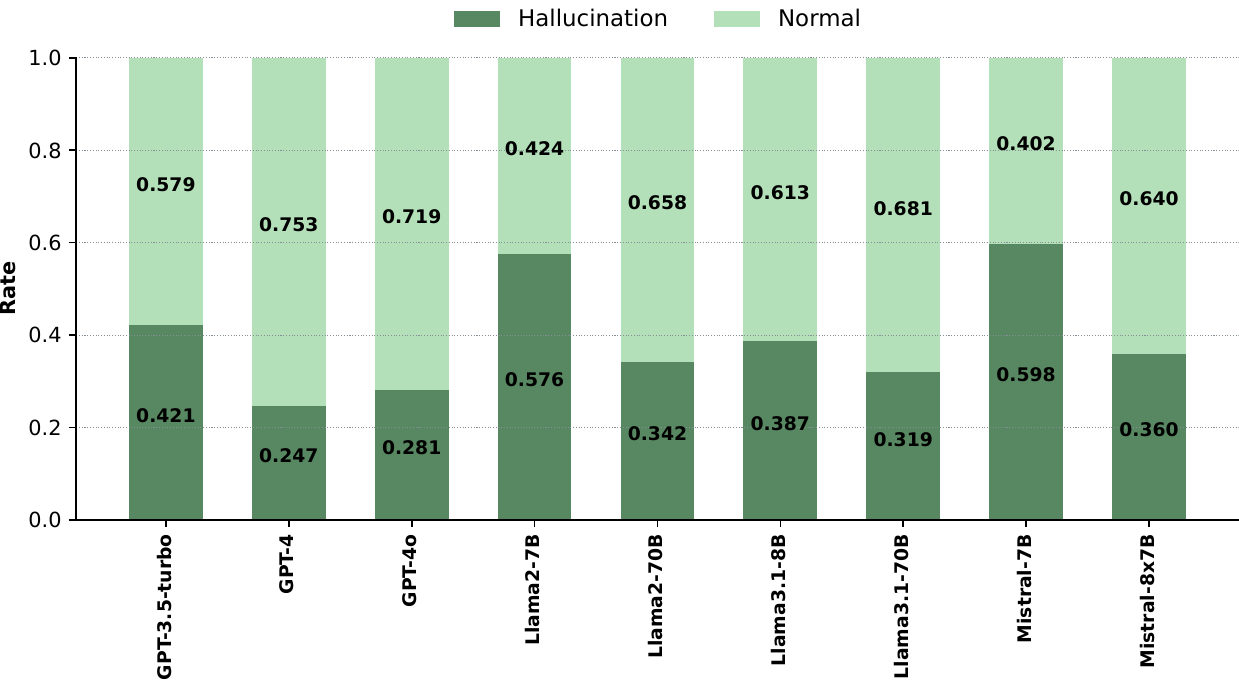}\\
\vspace{-0.2cm}
\caption{Overall Hallucination Rate of Various LLMs}
\vspace{0.3cm}
\label{fig:overall}
\end{subfigure}
\begin{subfigure}{}
\centering
\includegraphics[width=0.9\linewidth]{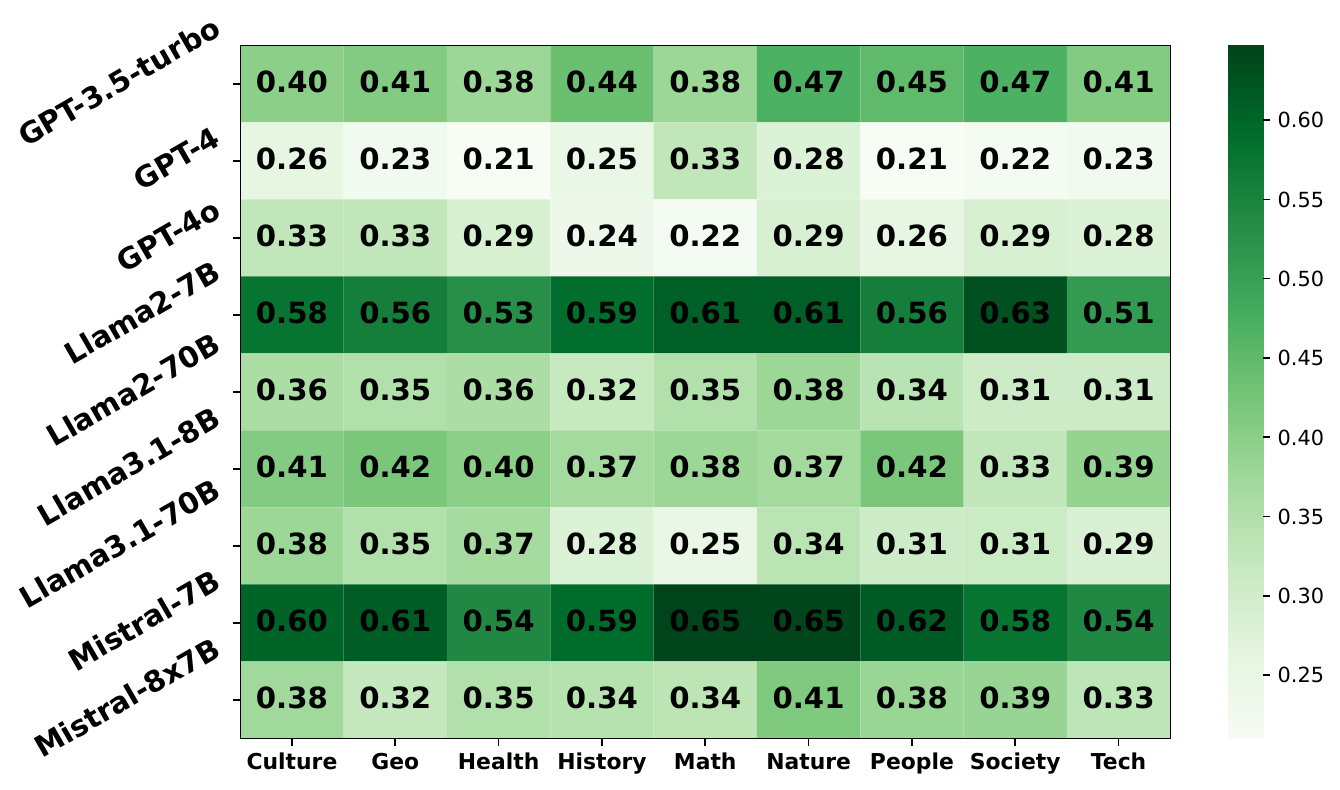}
\caption{Hallucination Rate Heatmap of Specific Domain}
\label{fig:rq1.2}
\end{subfigure}
\vspace{-3mm}
\end{figure}

\head{Effectiveness across LLMs.} We instruct LLMs under test utilizing Q\&A pairs derived from \tool, and automatically label both hallucination and normal responses. The results are presented in \figref{fig:overall}, illustrating the proportion of hallucination responses versus normal responses from LLMs under test.

Among all models, GPT-4 exhibits the best performance, demonstrating the lowest proportion of hallucinatory responses in the test cases generated by \tool, at only 24.7\%, while GPT-4o and ChatGPT fall slightly behind with 
28.1\% and 42.1\%. Open-source LLMs including Llama2-7B-chat-hf, Llama3.1-8B-Instruct, and Mistral-7B-Instruct-v0.2 with fewer parameters perform worse, but their counterparts with larger parameters (i.e., Llama2-70B-chat-hf, Llama3.1-70B-Instruct, and Mixtral-8x7B-Instruct-v0.1) achieve higher normal response rates surpassing ChatGPT on \tool. 
This indicates that the test cases generated by \tool successfully trigger hallucination responses across various LLMs when confronted with questions requiring logical reasoning capabilities.

\head{Effectiveness on Specific Domain Knowledge for Each LLM.}
To further explore the effectiveness of \tool in identifying FCH issues spanning various domains of LLMs, we compare hallucination response across nine specific domain knowledge. 
\figref{fig:rq1.2} presents the generated heatmaps of the confusion matrices for the specific knowledge field hallucination response rate based on the testing results.
It can be clearly observed that different models exhibit varying strengths and weaknesses across distinct knowledge domains. 
An interesting finding is that, within the domains of natural sciences and mathematics, LLMs generally exhibit weaker performance. This is potentially because there are many astrophysical or mathematical entities and their interrelationships in generated test cases by \tool. To answer such questions, the LLM needs an extensive understanding of astrophysical knowledge and mathematical theory. Thus, we infer that this realm of knowledge is not well-covered in the training datasets of LLMs under test, thereby resulting in high hallucination rates. Such a disparity in knowledge is likely a significant factor in the observed underperformance of LLMs in these specific domains.


\begin{tcolorbox}[title=ANSWER to RQ1, boxrule=0.8pt,boxsep=1.5pt,left=2pt,right=2pt,top=2pt,bottom=1pt]

Our evaluation using \tool reveals that existing LLMs have a notable tendency to produce FCH when faced with logical reasoning challenges, with hallucination rates ranging from 24.7\% to 59.8\%. The results varied across knowledge domains, highlighting that LLMs are more prone to FCH when answering questions that require highly specialized, domain-specific knowledge. 

\end{tcolorbox} 

\subsection{RQ2: FCH Categorization and Analysis}
\subsubsection{FCH Categorization}
We categorize the hallucination responses in more detail and focus primarily on the two types: error knowledge response, error inference response, and contradictory response. Note that we consider refusal to respond, such as `I don't know' due to the lack of relevant knowledge to be adhering to LLMs' honesty and truthfulness principles. Therefore, we categorize refusal to respond as a normal response. 
To ensure fair and unbiased categorization, 100 hallucination-related responses were randomly selected and independently categorized by three co-authors, who then discussed the results to reach a consensus categorization.

\head{Error Knowledge Response.} Originated from LLMs utilizing erroneous or contextually inappropriate knowledge during the reasoning process.

\head{Error Inference Response.} The most frequently occurring type is attributed to the lack of reasoning power and flawed reasoning thoughts of LLMs.




\subsubsection{Hallucination Measurement} 
Here, we provide the distribution of the hallucination categorization results, as demonstrated in \figref{fig:rq2}. There is partial overlap between these two types of hallucinations because incorrect reasoning processes may also involve erroneous knowledge. Among these issues, there are several contradictory answers primarily arising from inconsistency between incorrect reasoning processes and correct answers; thus, it exists in these two types of errors. Error inference hallucination presents the most, totalling half of the results on average. 
This indicates that the primary cause of FCH issues in logical reasoning is the insufficient reasoning capability of LLMs.
Besides, error knowledge adopted by LLMs during the logical reasoning process leads to 42.0\% 
FCH issues on average. The overlaps account for 7.9\%-21.1\% at the hallucination ratio, which indicates there are entities where LLMs have learned entirely erroneous relevant information, necessitating the employment of certain measures for correction.

\begin{figure}
\centering
\includegraphics[width=\linewidth]{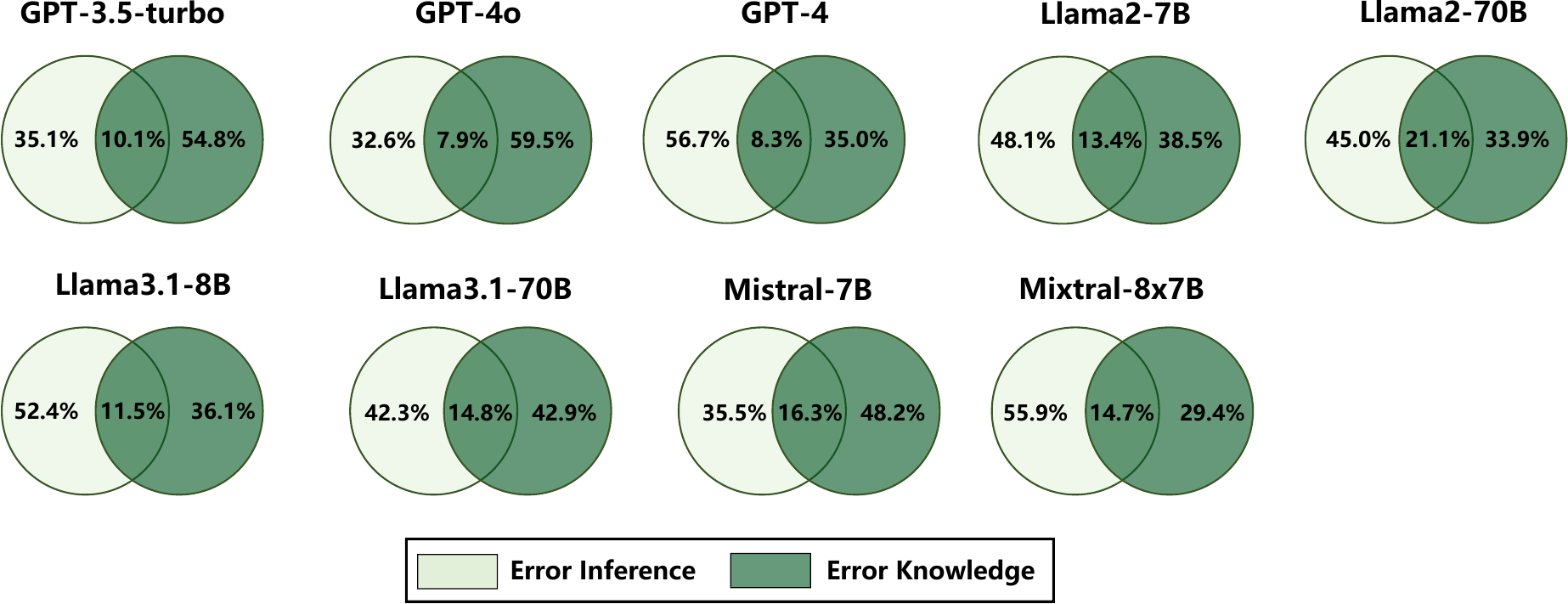}\\
    \vspace{-0.1cm}
    \caption{FCH Categorization.}
    \vspace{-0.3cm}
    \label{fig:rq2}
\end{figure}

\subsubsection{Case Study}
The preceding analysis broadly summarizes the distribution of categories for logical reasoning-related FCH. According to our investigation, error inference response and error knowledge response are the most prevalent two types.

\head{Error Inference Hallucination.}
One of the most common types of logical reasoning leading to error inference hallucination is temporal attribute reasoning, proven to be a category of reasoning task that performs poorly on LLMs~\cite{qiu2023large}. 
As illustrated in \figref{fig:case1}, error inference with correct knowledge leads to a hallucination response from Mistral-7B-v0.2. As knowledge provided by the LLM reasoning process, it is clear that the answer should be `Yes' as the 1874 Canadian federal election applies to the jurisdiction of Canada. However, it appears that the LLM has become ensnared by its limitations.
A possible explanation for this phenomenon is that the LLM does not utilize its reasoning abilities but rather relies on unreliable intuition to respond when faced with a question lacking detailed instructions. This insight inspires us to explore methods for effectively enhancing the reasoning capabilities of LLMs through a single interaction, guiding these models toward uncovering answers in a way that mirrors human reasoning processes.

\begin{figure}
\centering
\includegraphics[width=\linewidth]{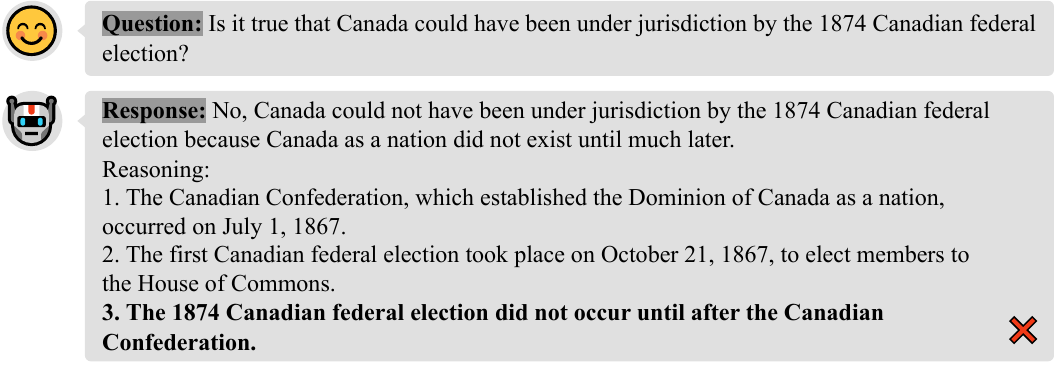}\\
\caption{Error Inference Hallucination from Mistral-7B-v0.2}
\label{fig:case1}
\end{figure}

\begin{tcolorbox}
\vspace{-0.15cm}
\textbf{Finding 1.} LLMs exhibit weaker performance in sensitivity to temporal information, as well as in their ability to discern sequential logic, which may result in error inference hallucination.
\vspace{-0.15cm}
\end{tcolorbox}

\head{Error Knowledge Hallucination.}
\figref{fig:case2} demonstrates a classic example of LLM hallucination caused by using error knowledge for logical reasoning. General Dmitry Karbyshev (1880-1945) was a Russian Imperial Army soldier who served in several wars during World War I (1914-1918) and II (1939-1945), and Louis Bernacchi (1876-1942) was an Australian physicist and astronomer who served in the Royal Naval Volunteer Reserve during World War I. Thus, the ground truth answer to this question should be `Yes'. 
However, when testing with Llama2-7B-chat-hf, an 
observation is that when LLMs encounter unfamiliar knowledge, they do not adhere to the honesty principle; instead, they fabricate knowledge and its sources. We subsequently employ an RAG-based scheme to reintroduce relevant knowledge, leading to the restoration of normal responses.
\begin{figure}
    \centering
    \small
    \includegraphics[width=\linewidth]{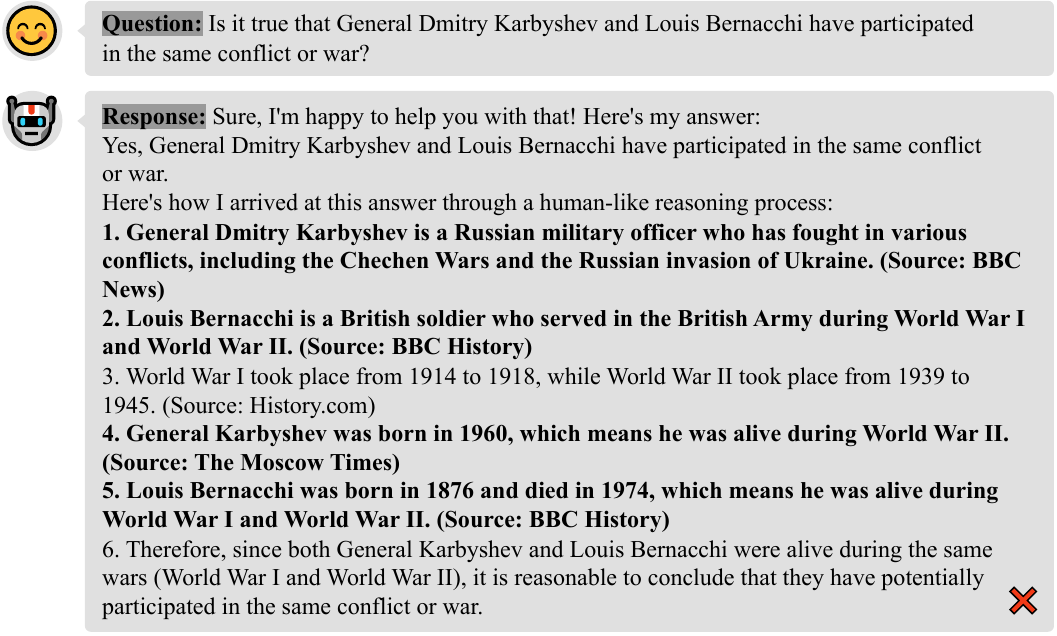}\\
    \caption{Error Knowledge Hallucination from Llama2-7B-chat-hf.}
\vspace{-0.3cm}
\label{fig:case2}
\end{figure}

We further conduct an out-of-distribution (OOD) knowledge experiment to figure out the cause of error knowledge hallucination. OOD is another factor that could cause FCH issues~\cite{zhang2023hallucination}. We design contextual reasoning utilizing recent sporting events and natural disasters from Wikipedia since June 2023, which is considered unutilized information in LLMs' training data based on their up-to-date introductions. We construct a series of test cases containing contextual descriptions of recent events using \tool, observing whether LLMs can be guided to respond to OOD knowledge and trigger FCH. 
\figref{fig:case2.2} is a typical case of OOD contexts leading to error knowledge hallucination. In the initial test of GPT-3.5-turbo, we provide information on several wildfires that happened from July 2023 to December 2023, and we confirm that this information is not in the LLM's training data. The LLM subsequently indicates that it has acquired this knowledge through this interactive process. However, a turning point emerges when we use test cases designed by \tool in the second test. Despite our questions based on preliminary factual knowledge provided, the LLM still confidently responds with a wrong answer.
\begin{figure}
    \centering
    \small
    \includegraphics[width=\linewidth]{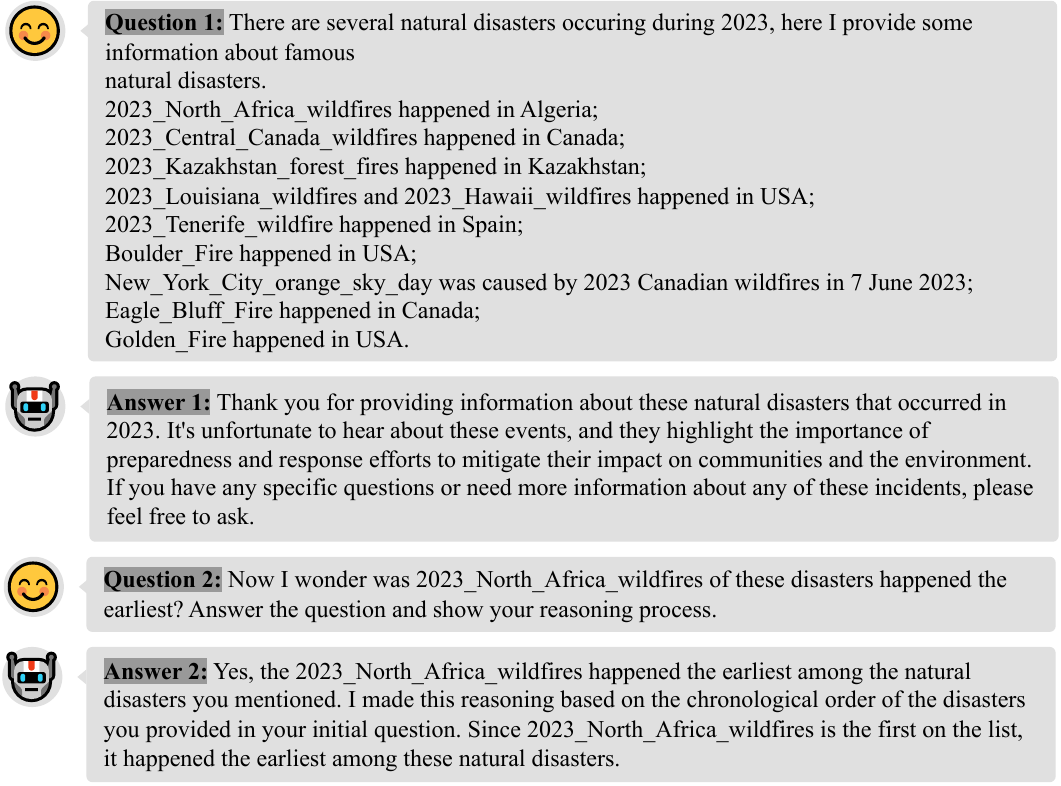}\\ 
    \caption{OOD-attributed Error Knowledge Example from GPT-3.5-turbo.}
\vspace{-0.5cm}
    \label{fig:case2.2}
\end{figure}

We analyze several potential causes for this situation. One possibility is that LLMs store incorrect knowledge in the first turn because what we provided was merely a list of events, rather than a list of events in their order of occurrence. In short, the normal reasoning process involves defining the earliest occurring events only after knowing the times of all events. However, the LLM opts to judge based on the order we provide event knowledge, which is contrary to facts. Another potential is that when LLMs encounter OOD knowledge if they do not strictly adhere to the principle of honesty by stating \textit{I do not know...}, they tend to complete and analyze the response based on error knowledge in their existing knowledge bases. Nevertheless, such responses are likely to induce hallucinations.

\begin{tcolorbox}
\textbf{Finding 2.} LLMs readily make erroneous assessments of misleading and unfamiliar knowledge and lead to error knowledge hallucination due to their assumptions.
\end{tcolorbox}

\begin{tcolorbox}[title=ANSWER to RQ2, boxrule=0.8pt,boxsep=1.5pt,left=2pt,right=2pt,top=2pt,bottom=1pt]
The detected FCH can be categorized into two types and the lack of reasoning capabilities poses a broader threat than the use of incorrect knowledge or inadequate inference strategies. 
\end{tcolorbox} 

\begin{figure}[!b]
\centering\includegraphics[width=0.8\linewidth]{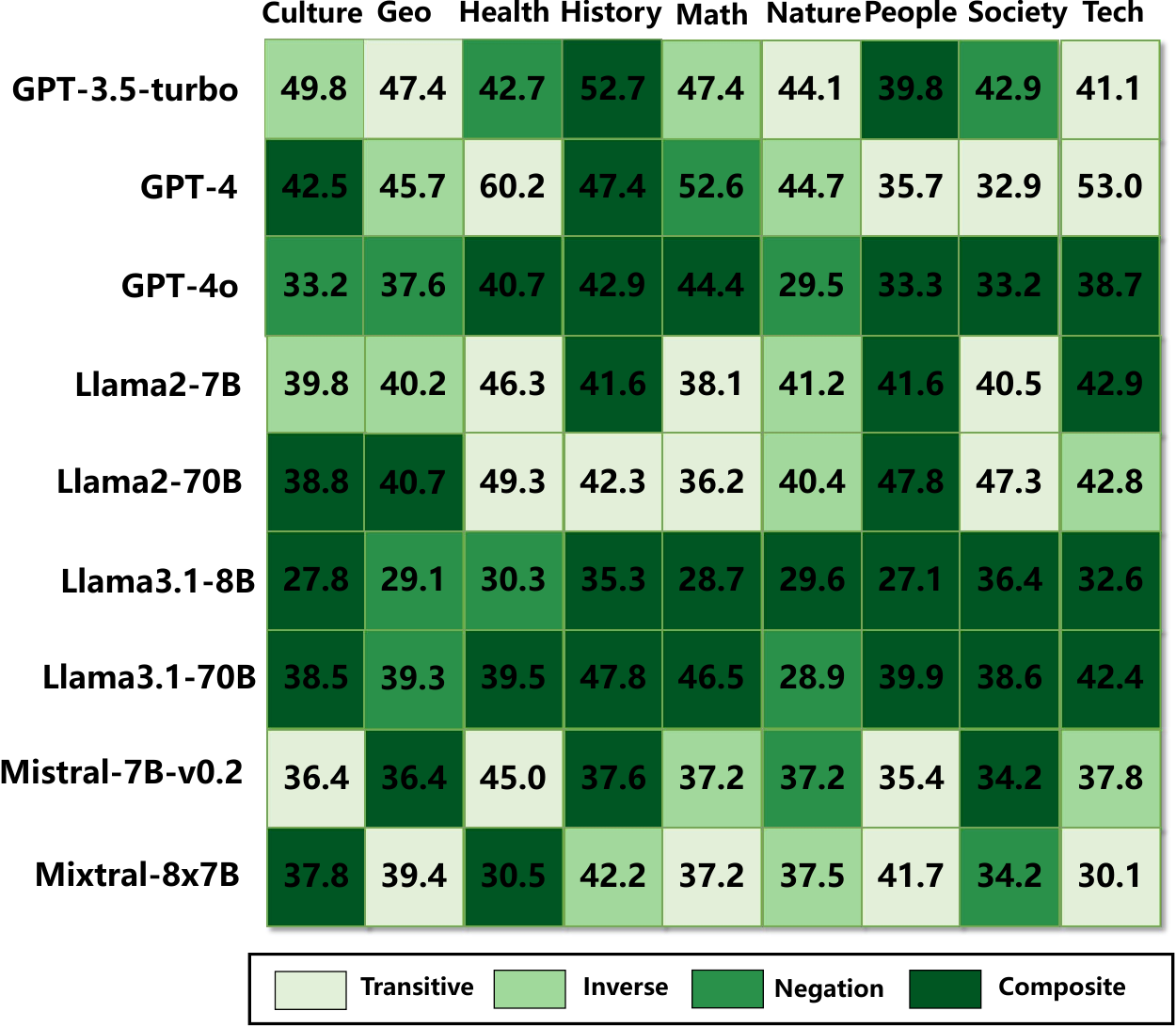}\\
\caption{Generation Rules that Trigger the Most Hallucination Responses on diverse LLMs across domains. The Number on Each Cell (the Unit: \%) Represents the Triggered FCH Ratio of the Corresponding Rule type.}
    \label{fig:rq3}
\end{figure}

\subsection{RQ3: Ablation Study}
We conduct an ablation study to investigate the capacity of each inference rule so that they can be distinctly used to uncover anomalies.
The four types of rules illustrated in \figref{fig:basic_op_for_predicates} are separately applied to generate Q\&A pairs. The symmetric reasoning rule is primarily utilized within the composite reasoning rule and does not introduce new knowledge on its own. Therefore, we did not include the symmetric reasoning rule as a separate condition in our ablation study.
For better visualization and understanding, we present the distribution of hallucination-related responses discovered with diverse rule-generated questions by \tool in Figure~\ref{fig:rq3}. The figure illustrates which type of rule can trigger the most hallucination responses for different LLMs and different domains of knowledge. It is distinctly evident that following the successful generation of various test cases using the four rules and their combinations, a substantial number of hallucinations are elicited across nine LLMs, with the composite rule yielding the highest amount of hallucinations. Following closely behind are the test cases generated using transitive rules, which have triggered a significant rate of FCHs in both the health and society domains.

From the comparison between four inference rules, we can conclude that all four inference rules demonstrate effectiveness when generating FCH test cases and inducing hallucination performances for LLM interaction.

\begin{tcolorbox}[title=ANSWER to RQ3, boxrule=0.8pt,boxsep=1.5pt,left=2pt,right=2pt,top=2pt,bottom=2pt]
The experimental results showcase the independence of four inference rules in eliciting FCHs, and the composite rules can trigger the most FCHs across various domains, which has proved to be a sound approach to generating test cases. 
\end{tcolorbox}

\subsection{RQ4: FCH Caused by Temporal Reasoning} 

This section presents the evaluation results based on temporal-property-related test cases. 


\begin{figure}[!b]
\centering
\includegraphics[width=1\linewidth]{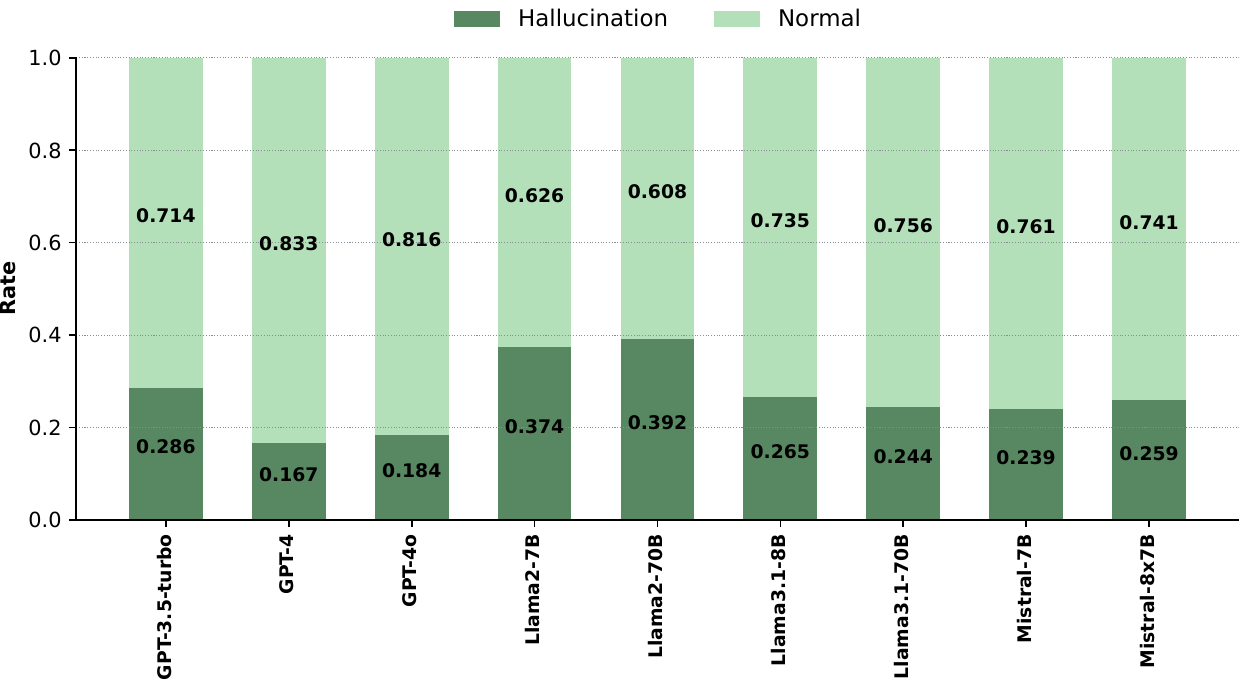}
\caption{LLM Hallucination Rate with Temporal Test Cases}
\label{fig:rq4-Effectiveness}
\end{figure}

\head{Effectiveness on Generating Temporal Q\&A Test Cases.} 
\tool generates 1,800 temporal test cases, and as illustrated in \figref{fig:rq4-Effectiveness}, there is a noticeable reduction in the hallucination rate for each LLM when compared to the results presented in \figref{fig:overall}, which suggests that LLMs possess some degree of temporal reasoning capabilities. However, it's important to note that these LLMs still exhibit temporal hallucinations, albeit to varying extents.
The result largely aligns with the previous discovery that the newer generation of LLMs performs better than the older version, as seen in Llama3-* and Llama2-*. However, fine-tuning LLM with more parameters does not improve reasoning ability, as seen in Llama2-7B and Llama2-70B.

\head{Temporal Hallucinations Categorizations.} 
As illustrated in \figref{fig:rq4-Categorization}, inference errors constitute a higher proportion of hallucinations than knowledge errors, which is consistent with the previous finding of \figref{fig:rq2}. 

\begin{figure}[!t]
\centering
\includegraphics[width=1\linewidth]{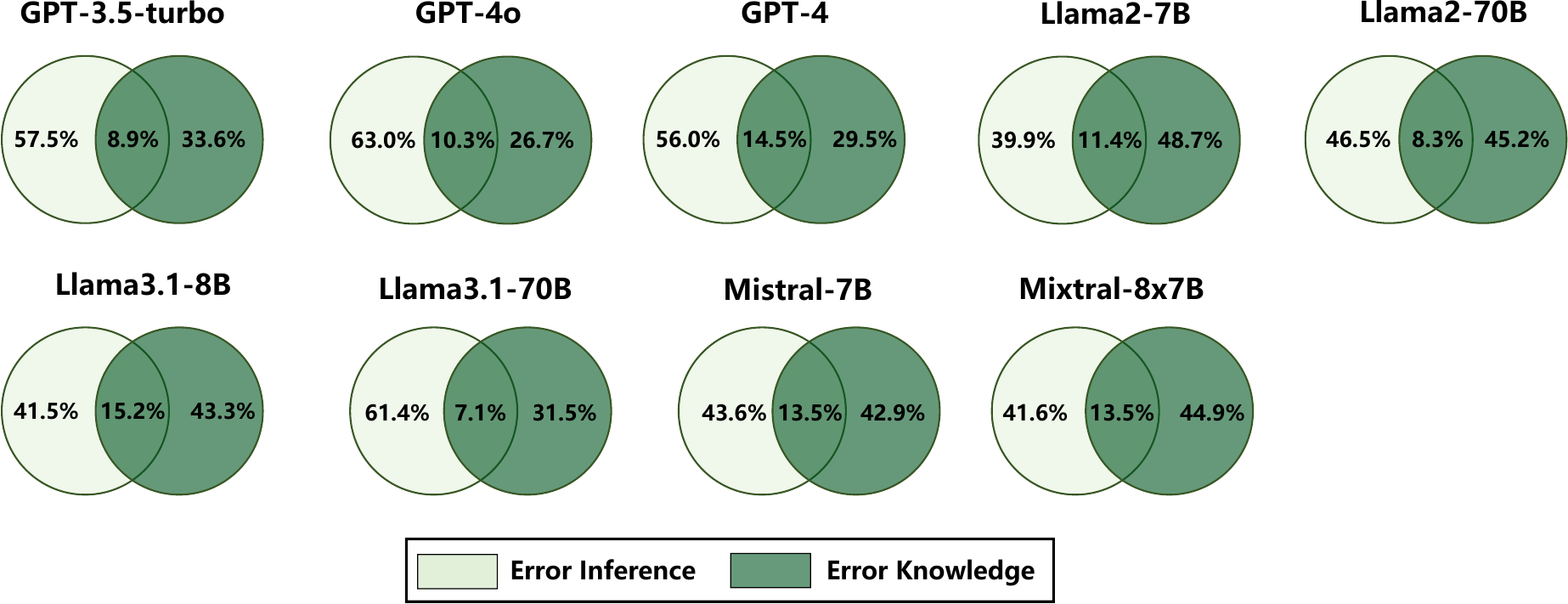}
\caption{RQ4: FCH Categorization.}
\label{fig:rq4-Categorization}
\end{figure}


\begin{figure}[ht]  
    \centering
    \includegraphics[width=0.75\linewidth]{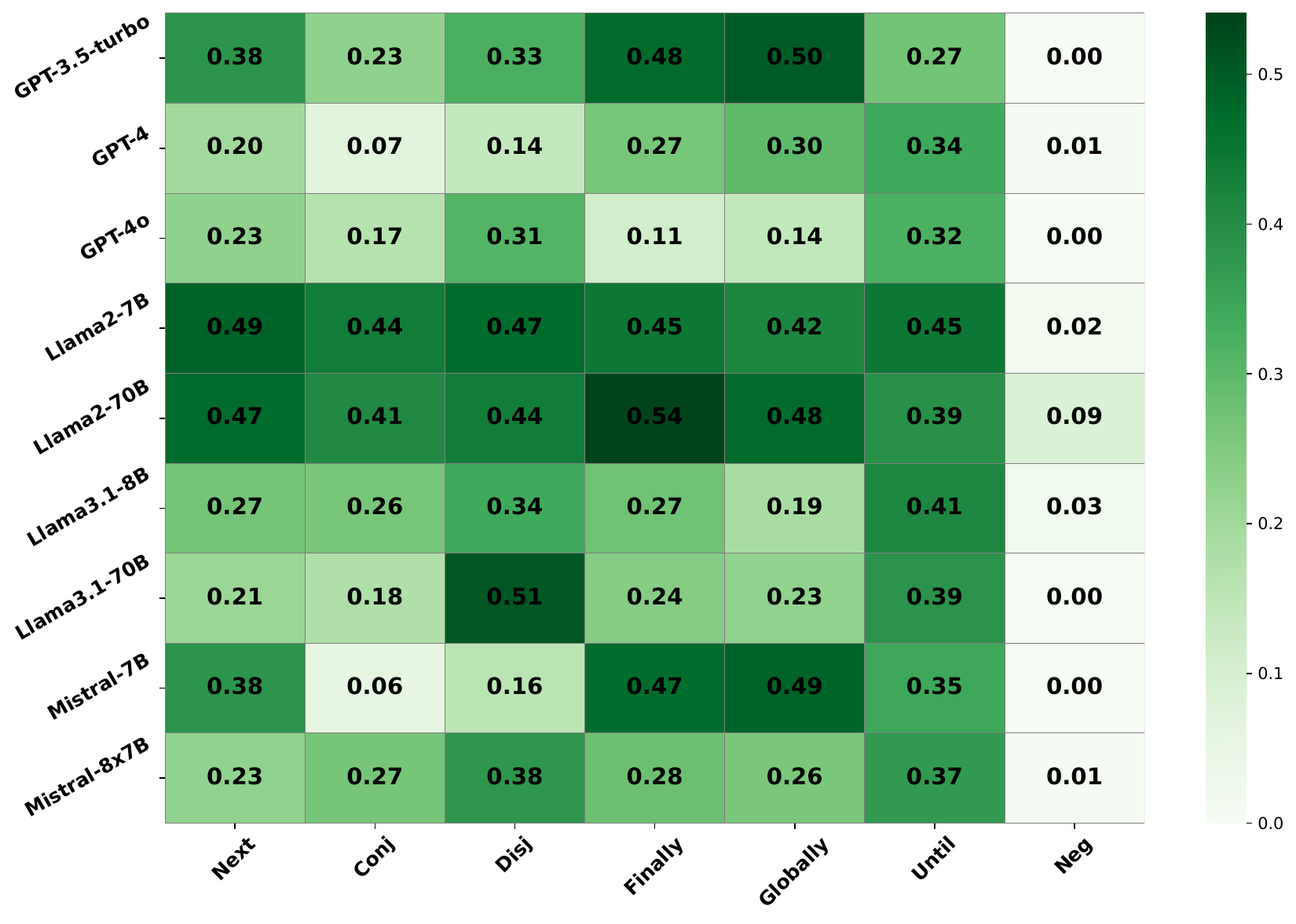}  
    \caption{Hallucination Rates wrt Temporal Operators}  
    \label{fig:heatmap}  
\end{figure}

\head{Ablation Study upon  Temporal Operators.} 
Figure \figref{fig:heatmap} illustrates the results of an ablation study examining various types of temporal test cases. We categorize these temporal test cases based on their outermost layer operator. Most of the test cases are single-layer temporal operators, except for conjunction and disjunction, which may include nested temporal formulas.
We record the likelihood of each type of temporal test case triggering hallucinations. For instance, when testing GPT-4, it is observed that 20\% of the test cases related to the "Next" operator successfully trigger hallucinations. 
Notably, the "Neg" operator triggers the fewest hallucinations, whereas operators like "Finally," "Globally," and "Until" lead to the highest occurrence of hallucinations.
Overall, these findings indicate that a single layer of temporal operators is sufficiently effective in detecting LLM hallucinations related to the temporal reasoning capability. 

\section{Discussion}
\subsection{Threats to Validity}

\subsubsection*{\textbf{Limited Coverage of Knowledge Databases}}

Our research predominantly employs data from the Wikipedia database to generate test cases using \tool. However, it is important to note that \tool is not limited to this specific database. Its design allows for easy extension and adaptation to various other knowledge bases, illuminating its versatility and applicability.

\subsubsection*{\textbf{Limited Accuracy of Hallucination Categorization}}
We utilize a dual approach for categorizing hallucinations, combining assessments from GPT-4 with human verification. Initially, GPT-4 classifies the hallucinations, after which we manually review a random sample of 100 instances. This process reveals that GPT-4's categorization accuracy stands at approximately 71\%, suggesting that integrating GPT-4 for hallucination categorization generally leads to reliable outcomes. We further note that techniques for further improving the LLM's categorization accuracy via prompt engineering are orthogonal to the scope of this work.

\subsection{Takeaway Messages}
\noindent\textbf{LLM Honesty During Training.} During the training of LLMs, it is important to focus on model honesty, e.g., how to enable large models to possess stronger critical thinking and logical reasoning abilities. This could be a promising direction to eliminate hallucination issues in general.

\noindent\textbf{Towards In-depth Understanding of LLM Hallucination.} 
The insights shows that 
it is important to further explore techniques to understand the deep-rooted causes of hallucinations LLMs through white-box methods. A promising direction is to enhance and augment the logical reasoning capabilities of LLMs to reduce hallucination issues.
\section{Related Work}
\subsubsection*{\textbf{Evaluating Hallucination in LLMs}}
Several benchmark datasets have been proposed to holistically assess the hallucination issues that may arise when LLMs generate responses to problem queries. 

TruthfulQA~\cite{lin-etal-2022-truthfulqa} is the most classic dataset for assessing whether language models generate truthful answers to questions. 
It tests whether the models learn incorrect answers during the generation process due to emulating human text. 
Another dataset HaluEval~\cite{HaluEval} samples 10K instances from the training sets of HotpotQA~\cite{yang2018hotpotqa}, OpenDialKG~\cite{moon2019opendialkg}, and CNN/DailyMail~\cite{see2017get}, and utilizes LLMs to generate hallucination-corresponding samples by setting tasks and employing specific sampling strategies, which is primarily aimed at question-answering tasks and text summarization tasks. 
KoLA~\cite{yu2023kola} tests the hallucination issues of LLMs in the domain of knowledge graphs and introduces tasks based on 19 focal entities, concepts, and events. 
It assesses the capacity of large language models (LLMs) to handle structured knowledge across four levels: memory, understanding, application, and creation. 
This aims to test the hallucination phenomena of LLMs in the domain of knowledge graphs. 
From the perspective of long context, BAMBOO~\cite{dong2023bamboo} and FActScore~\cite{min2023factscore} both target the long text generation capabilities of large language models, assessing their performance in extended context scenarios through factual verification. 
Additionally, there are assessments of large language models for hallucination issues in specific domains such as healthcare and finance~\cite{umapathi2023med, kang2023deficiency}.

\subsubsection*{\textbf{Mitigating Hallucination in LLMs}}
Current mitigation strategies include black-box prompting guidance and fine-tuning with extensive factual data. 
Considerable work~\cite{lightman2023let, varshney2023stitch,gou2023critic,vu2023freshllms} involves utilizing external knowledge retrieval or automated feedback adjustments to make text responses from large language models more controllable and reliable. 
Similar approaches are proposed for multimodal hallucination mitigation such as Woodpecker~\cite{yin2023woodpecker}, which extracts key concepts to generate questions and knowledge assertions for hallucination diagnosis and mitigation.
Another thread involves using fine-tuning techniques to mitigate model hallucinations. AlpaGasus~\cite{chen2023alpagasus}, Elaraby et al.~\cite{elaraby2023halo} and Tian et al.~\cite{tian2023fine} apply fine-tuning techniques on high-quality data for better effectiveness and factuality. 
Besides, the findings of Elaraby et al.~\cite{elaraby2023halo} reveal that the knowledge injection technique enhances the performance of less robust LLMs. 
Additionally, an increasing number of researchers are turning towards studying white-box repairing methods for open-source large language models. 
The evidence presented in the discourse by Azaria et al.~\cite{azaria2023internal} suggests that the internal states of Large Language Models can be utilized to discern the veracity of statements, thereby elucidating the underlying causes of factual hallucinations in LLMs. 
Studies like IIT~\cite{li2023inference} and Repr~\cite{zou2023representation} endeavor to alleviate hallucination issues by delving into LLMs' deep-layer information through the analysis of internal model states. 
This approach not only augments the interpretability of large language models but is also regarded as a vital research direction for the future of explainable and trustworthy AI.

\section{Conclusion}
We target the critical challenge of FCH in LLM, where they generate outputs contradicting established facts. We developed a novel automated testing framework that combines logic programming and metamorphic testing to systematically detect FCH issues in LLMs. Our novel approach constructs a comprehensive factual knowledge base by crawling sources like Wikipedia, then applies innovative logic reasoning rules to transform this knowledge into a large set of test cases with ground truth answers. 
These reasoning rules are either predefined relations or automatically generated from randomly sampled temporal formulae. 
LLMs are evaluated on these test cases through template prompts, with two semantic-aware oracles analyzing the similarity between the logical/semantic structures of the LLM outputs and ground truth to validate reasoning and pinpoint FCHs. 

Across diverse subjects and LLM architectures, our framework automatically generated over 9,000 useful test cases, uncovering hallucination rates as high as 59.8\% and identifying lack of logical reasoning as a key contributor to FCH issues. This work pioneers automated FCH testing capabilities, providing a comprehensive benchmark, data augmentation techniques, and answer validation methods. The implications are far-reaching --- enhancing LLM reliability and trustworthiness for high-stakes applications by exposing critical weaknesses while advancing systematic evaluation methodologies.


\bibliographystyle{IEEEtran}
\balance
\normalem
\bibliography{8.ref}



\end{document}